\title{\LARGE \bf
Actor-Critic or Critic-Actor? A Tale of Two Time Scales
}
\author{Shalabh Bhatnagar$^{1}$, Vivek S.\ Borkar$^{2}$, and Soumyajit Guin$^{1}$% <-this % stops a space
%\thanks{*This work was not supported by any organization}% <-this % stops a space
\thanks{SB was supported by a J.~C.~Bose Fellowship, Project No.~DFTM/ 02/ 3125/M/04/AIR-04 from DRDO under DIA-RCOE, a project from DST-ICPS, and the RBCCPS, IISc.}
\thanks{VSB was supported by the S.~S.~Bhatnagar Fellowship from the Council of Scientific and Industrial Research, Government of India.}
\thanks{$^{1}$Department of Computer Science and Automation, Indian Institute of Science, Bengaluru 560012, India
        {\tt\small shalabh@iisc.ac.in, gsoumyajit@iisc.ac.in}}%
\thanks{$^{2}$Department of Electrical Engineering, Indian Institute of Technology Bombay, Mumbai 400076, India
        {\tt\small borkar.vs@gmail.com}}%
}
\newtheorem{thm}{Theorem}
\newtheorem{lem}[thm]{Lemma}
\newtheorem{prop}[thm]{Proposition}
\newtheorem{remark}[thm]{Remark}
\newtheorem{assum}{Assumption}
\newcommand{\F}{\mathcal{F}}
\newcommand{\M}{\mathcal{M}}
\begin{document}

\baselineskip=24pt

\maketitle
%\thispagestyle{empty}
%\pagestyle{empty}

%%%%%%%%%%%%%%%%%%%%%%%%%%%%%%%%%%%%%%%%%%%%%%%%%%%%%%%%%%%%%%%%%%%%%%%%%%%%%%%%
\begin{abstract}
 We revisit the standard formulation of tabular actor-critic algorithm as a two time-scale stochastic approximation with value function computed on a faster time-scale and policy computed on a slower time-scale. This emulates  policy iteration. We observe that reversal of the time scales will in fact emulate value iteration and is a legitimate algorithm. We provide a proof of convergence and compare the two empirically with and without function approximation (with both linear and nonlinear function approximators) and observe that our proposed critic-actor algorithm performs on par with actor-critic in terms of both accuracy and computational effort.

\medskip

\noindent
{\bf Keywords:} Reinforcement learning; approximate dynamic programming; critic-actor algorithm; two time-scale stochastic approximation.

\end{abstract}

\section{Introduction}
\label{introduction}
The actor-critic algorithm of Barto et al.\ \cite{BartoSA} is one of the foremost reinforcement learning algorithms for data-driven approximate dynamic programming for Markov decision processes. Its rigorous analysis as a two time-scale stochastic approximation was initiated in \cite{KondaB} and \cite{KondaT}, first in tabular form, then with linear function approximation, respectively. The algorithm has a faster time scale component for value function evaluation (the `critic'),  with policy evaluation on a slower time scale (the `actor'). Using two time-scale philosophy, the former sees the latter as quasi-static, i.e., varying slowly enough on the time-scale of the critic that critic can treat the actor's output essentially as a constant. In turn, the actor sees the critic as quasi-equilibrated, i.e., tracking the value function corresponding to the current policy estimate of the actor. Thus the scheme emulates policy iteration, wherein one alternates between value function computation for a fixed policy and policy update based on this value function by minimization of the associated `Q-value'. Another interpretation is that of a Stackelberg game between the actor and the critic, with the actor leading and being followed by the critic. That is, the actor tunes the policy slowly and the critic reacts to it rapidly, so that the actor can factor in the critic's reaction in her update.

This raises the issue as to what would happen if the time scales of the actor and the critic are reversed, rendering critic the leader. We observe below that in this case, the scheme emulates value iteration, making it a valid reinforcement learning scheme.  We call it the `critic-actor' algorithm. This structure as such is not novel and was  explored in the context of Q-learning in \cite{ShalabhB}, \cite{ShalabhL} with a different motivation. Its incorporation into the actor-critic facilitates a dimensionality reduction and a clean convergence analysis. {This is so because we work with value and not Q-value functions.}

%In Section~\ref{framework}, we present the basic framework of Markov decision processes. This will be followed in Section~\ref{acca} with the critic-actor algorithm. We then provide the convergence analysis of this algorithm in Section~\ref{convergence}. 
While there are multiple variations of the actor-critic based on the exact scheme used for policy updates, we stick to one of the three proposed in \cite{KondaB} to make our point.  It may be recalled that one of the purported advantages of the actor-critic algorithm is that because of the slow time scale of the policy update which renders it quasi-static, the value function update is `essentially' a linear operation and therefore one can legitimately use schemes such as TD$(\lambda)$  \cite{Sutton}, \cite{TsitsiklisV} that are based on linear function aproximation. This is in contrast with, e.g., Q-learning \cite{Watkins} where the nonlinearity of the iterate is not linear function approximation friendly. This problem returns to actor-critic if we interchange the time scales. Nevertheless, given the current trend towards using neural networks for function approximation, this theoretical advantage is no longer there. In particular, this puts the actor-critic and critic-actor schemes a priori on equal footing as far as nonlinear function approximation is concerned.
% Section~\ref{numerical} gives numerical experiments with and without function approximation. The results of more detailed experiments can be found in the arXiv version of this paper, see \cite{SBVBSG}. Finally, Section~\ref{conclusion} presents the concluding remarks and points to certain interesting future directions.

\section{The Basic Framework}
\label{framework}

Our Markov decision process (MDP) is a random process $X_n,n\geq 0$, in a finite state  space $S$ satisfying, a.s.,
\[P(X_{n+1}=j\mid X_k, A_k, k\leq n) = p(X_n, A_n,j), \ \forall n\geq 0.
\]
Here {$A_n\in U(X_n)$ is the action at time $n$ when the state is $X_n$} where $U(i) :=$ the finite set of actions  admissible 
in state $i$. 
Also, $p(i,a,j)$ denotes the transition probability from state $i$ to $j$ when a feasible action $a$ is chosen in state $i$. {$\varphi := \{A_n\}$ with each $A_n$ as above will be said to be admissible.}
For simplicity, we take $U(\cdot) \equiv U$. 
A stationary deterministic policy (SDP) $f$ is one where $A_n = f(X_n)$ for some $f:S\rightarrow U$. Similarly, if given $X_n$, $A_n$ is conditionally independent of $\{X_m,A_m, m<n\}$
and  has the same conditional law $\pi: S\rightarrow \mathcal{P}(U)$ $\forall n$, then $\pi$ is called a stationary randomised policy (SRP).
Here $\mathcal{P}(U) :=$ the space of probability vectors on $U$. We shall denote
the probability vector $\pi(i)$ as $\pi(i) = (\pi(i,a),a\in U(i))^T$. Let $g: S\times U\times S\rightarrow \mathbb{R}$ denote the single-stage cost function
and $\gamma\in (0,1)$  the discount factor.

{For a given admissible sequence $\varphi = \{A_n\}$, consider the infinite horizon discounted cost
\begin{equation}
	\label{vf}
	V_{\varphi}(i) \stackrel{\triangle}{=} E\left[ \sum_{n=0}^{\infty} \gamma^n g(X_n, A_n, X_{n+1})\mid X_0=i\right], \mbox{ } i\in S.
\end{equation}
The function $V_{\pi} := V_{\varphi}$ for $\varphi\approx$ an SRP $\pi$  is called the value function under the SRP $\pi$.}
 The corresponding (linear) Bellman equation takes the form
\begin{equation}
	\label{bepi}
	V_\pi(i) = \sum_{a\in U(i)} \pi(i,a)\sum_{j\in S} p(i,a,j)(g(i,a,j) + \gamma V_\pi(j)).
\end{equation}
Define the `value function'
\begin{equation}
	\label{vfo}
{V^*(i) = \min_{\varphi} V_{\varphi}(i).}
\end{equation}
This satisfies the Bellman equation
\begin{equation}
	\label{be}
	V^*(i) = \min_{a\in U(i)} \left( \sum_{j\in S}p(i,a,j)(g(i,a,j) +\gamma V^*(j)\right).
\end{equation}
Define the Q-value function $Q^*(\cdot,\cdot): S\times U \to\mathcal{R}$ so that $Q^*(i,a) :=$ the total cost incurred if starting in state $i$, action $a$ is chosen and  the optimal action is chosen in each state
visited subsequently, i.e.,
\begin{equation}
	\label{Qv}
	Q^*(i,a) = \sum_{j\in S}p(i,a,j)(g(i,a,j) +\gamma V^*(j)).
\end{equation}
Then an SRP $\pi^*$ would be optimal if support$(\pi^*(i))$ $ \subset \arg\min Q^*(i,\cdot)$.
In particular, an optimal SDP (hence SRP) exists and is given by any choice of minimiser of $Q^*(i, \cdot )$ for each $i$. See \cite{Puterman} for an extensive treatment.

Policy iteration (PI) and value iteration (VI) are two of the numerical procedures for solving the Bellman equation. 
Whereas actor-critic algorithms  \cite{KondaB} emulate PI, the algorithm  with the timescales of the actor-critic algorithm reversed will be seen to mimic VI.

\section{The Proposed Critic-Actor Algorithm}
\label{acca}

%We recall first the actor-critic algorithm presented as Algorithm 3 in \cite{KondaB}. 
Let $V_n(\cdot)$ and $\pi_n(\cdot,\cdot)$ denote the estimates at instant $n$ of the value function and the optimal SRP. 
%The actor-critic algorithms in \cite{KondaB} have been presented assuming that different components of the value and policy vectors
%are updated on different processors and with different clock cycles. 
%These updates are then shared with other processors with possible communication delays in transmission. 
%
%Let $a_0\in U$ be an arbitrarily selected action. The idea of selecting this action is to set the probability of picking it automatically based on the
%updates for the other actions.  Let $\mathcal{I}$ denote the probability simplex
%\[\mathcal{I} = \{ (a_1,\ldots,a_{|U|-1}) \mid a_i\geq 0, \forall i = 1,\ldots,|U|-1 \mbox{ and } \sum_{i=1}^{|U|-1}a_i \leq 1 \}.
%\]
%Here $|U|$ denotes the cardinality of the set $U$. Let $\Gamma: \mathbb{R}^{|U|-1} \rightarrow \mathcal{I}$ denote the projection operator.
%Further, we let ${\displaystyle \pi_n(i,a_0) = 1- \sum_{j=1}^{|U|-1}\pi_n(i,a_j)}$, i.e., action $a_0$ takes the remainder probability from the
%updates of probabilities of the other actions. 
%Our setting is different from \cite{KondaB} as we do not consider updates on multiple processors. 
We consider here an off-policy setting where states and state-action tuples are sampled from a priori given distributions in order 
to decide the particular state whose value is updated next, as well as the action-probability estimate (in the SRP) of the sampled state-action tuple to be updated. 
This is more general than the on-policy setting commonly considered in standard actor-critic algorithms such as \cite{KondaT,ShalabhSGL},
as the latter  turns out to be a special case of the off-policy setting we consider. 
%Such a setting is commonly considered in other actor-critic algorithms such as 
%	and Q-learning algorithms. 

Let $\{Y_n\}$ and $\{Z_n\}$ be $S$ and $S\times U$ valued processes such that if $Y_n=i\in S$, then the value of state $i$, denoted by $V_n(i)$, is updated
at the $n$th iterate. Likewise if $Z_n =(i,a)$, then the policy component corresponding to the $(i,a)$-tuple is updated. Define $\{\nu_1(i,n)\}, \{\nu_2(i,a,n)\}$  by (for $n>0$):
\[\nu_1(i,n) = \sum_{m=0}^{n-1} I\{Y_m =i\}, \mbox{ with } \nu_1(i,0)=0,
\]
\[
\nu_2(i,a,n) = \sum_{m=0}^{n-1} I\{Z_m=(i,a)\}, \mbox{ with } \nu_2(i,a,0)=0.
\]

As with policy gradient schemes \cite{SuttonMSM}, we parameterize the policy.  Specifically we consider
a parameterised Boltzmann or Gibbs form for the SRP as below.
\begin{equation}
	\label{gibbs}
	\pi_\theta(i,a) = \frac{\exp(\theta(i,a))}{\sum_{b}\exp(\theta(i,b))}, \mbox{ } i\in S, a\in U.
\end{equation}
{For a given $\theta_0\gg 0$, let $\Gamma_{\theta_0}: \mathbb{R} \to [-\theta_0,\theta_0]$ denote the projection map.} 
%Define a projection operator $\Gamma_{\theta_0}: \mathbb{R}\rightarrow \mathbb{R}$ as follows: Let $\theta_0\in \mathbb{R}$ be a prescribed constant. Then,
%\[
%\Gamma_{\theta_0}(x) =\left\{
%\begin{array}{clc}
%	-\theta_0 & \mbox{ if } & x\leq -\theta_0\\
%	x & \mbox{ if } & -\theta_0 < x < \theta_0\\
%	\theta_0 & \mbox{ otherwise.}&
%\end{array}\right.
%\]
Let $\{a(n)\}, \{b(n)\}$ be positive step size sequences satisfying conditions we specify later.

%\subsection{The Actor-Critic Algorithm of \cite{KondaB}}
%\label{aca}
% Here $b(n) = o(a(n))$. Let $\{\xi_n(i,a)\}$ and $\{\eta_n(i,a)\}$, $i \in S$, $a\in U$ be independent families of i.i.d random variables with
%law $p(i,a,\cdot)$. Also, let $\{\phi_n(i)\}$ be a sequence of $U$-valued i.i.d random variables with the conditional
%law of $\phi_n(i)$ given $\mathcal{F}_n \stackrel{\triangle}{=} \sigma(V_m(\cdot), \pi_m(\cdot), \xi_m(\cdot,\cdot)$, $\eta_m(\cdot,\cdot), m\leq n)$
%being $\pi_n(i,\cdot)$.
%The actor-critic algorithm (viz., Algorithm 3 of \cite{KondaB}) has the following updates for the actor and the critic respectively:
%\begin{equation}
%\label{actor1}
%\begin{split}
%&\theta_{n+1}(i,a) = \Gamma(\theta_n(i,a) + a(\nu_2(i,a,n))[V_n(i) \\
%&- g(i,a,\eta_n(i,a))-\gamma V_n(\eta_n(i,a))]I\{Z_n=(i,a)\}),
%\end{split}
%\end{equation}

%\begin{equation}
%\label{critic1}
%\begin{split}
%V_{n+1}(i)& = V_n(i) + b(\nu_1(i,n))[g(i,\phi_n(i),\xi_n(i,\phi_n(i))) \\
%&+ \gamma V_n(\xi_n(i,\phi_n(i)))- V_n(i)] I\{Y_n=i\}.
%\end{split}
%\end{equation}

\subsection*{The Critic-Actor Algorithm}
\label{caa}
The proposed critic-actor algorithm has similar set of updates as Algorithm 3 of \cite{KondaB} but with the timescales reversed, i.e., $a(n) = o(b(n))$.
%In other words, the critic is now updated on the slower timescale while the actor is updated on the faster one. 
Let $\{\xi_n(i,a)\}$ and $\{\eta_n(i,a)\}$, $i \in S$, $a\in U$ be independent families of i.i.d random variables with
law $p(i,a,\cdot)$. Let $\{\phi_n(i)\}$ be a sequence of $U$-valued i.i.d random variables with the conditional
law of $\phi_n(i)$ given {the sigma-algebra $\sigma(V_m(\cdot), \pi_m(\cdot), \xi_m(\cdot,\cdot)$, $\eta_m(\cdot,\cdot), m\leq n)$ generated by the random variables realised till $n$, 
being denoted by $\pi_n(i,\cdot)$.}
The critic and actor
recursions now take the following form:
\begin{equation}
\label{critic2}
\begin{split}
V_{n+1}(i) &= V_n(i) + a(\nu_1(i,n))[g(i,\phi_n(i),\xi_n(i,\phi_n(i)))\\
&+ \gamma V_n(\xi_n(i,\phi_n(i)))- V_n(i)] I\{Y_n=i\},
\end{split}
\end{equation}
\begin{equation}
\label{actor2}
\begin{split}
&\theta_{n+1}(i,a) = \Gamma_{\theta_0}(\theta_n(i,a) + b(\nu_2(i,a,n))[V_n(i) \\
&- g(i,a,\eta_n(i,a))-\gamma V_n(\eta_n(i,a))]I\{Z_n=(i,a)\}).
\end{split}
\end{equation}

The $V_n$ update is governed by the step-size sequence $a(n),n\geq 0$, while the
$\theta_n$ update is governed by $b(n),n\geq 0$. From Assumption~\ref{a1} below, this implies that the $V_n$ update proceeds on a slower timescale
in comparison to the $\theta_n$ update.

When compared to Q-learning, the actor here replaces the exact minimisation in each Q-learning iterate, not always easy, by a recursive scheme for the same and can be seen as a subroutine for minimization, albeit executed concurrently using a faster time scale to get the same effect.
In the next section, we give a proof of convergence of our critic-actor algorithm.

\section{Convergence of Critic-Actor  Scheme}
\label{convergence}

We consider the ordinary differential equation (ODE) approach for proving convergence of the stochastic approximation recursion originally due to Derevitskii-Fradkov and Ljung \cite{Ljung} and subsequently also studied in many other references, cf.~ \cite{Benaim, Benveniste, BorkarB, KushnerC, KushnerY}.
The idea here is to show that a suitable piecewise  linear interpolation of the iterates asymptotically tracks an associated o.d.e.~in an appropriate sense as time tends to infinity. As can be seen from the several textbook treatments of this approach as referenced above, this approach has become fairly standard in the stochastic approximation community.

%We first briefly sketch the convergence proofs for the Actor-Critic algorithm (Algorithm 3 of \cite{KondaB}). We shall later provide the proof of our proposed Critic-Actor algorithm.
%For simplicity, denote $\bar{a}(n) \equiv a(\nu_1(i,n))$ and $\bar{b}(n) \equiv b(\nu_2(i,a,n))$ respectively.
%Also, denote $\hat{a}(n) \equiv a(\nu_1(i,n))$ and $\hat{b}(n) \equiv b(\nu_2(i,a,n))$ respectively. 
%Note that the step-sizes $\bar{a}(n)$ and $\bar{b}(n)$ (resp.~$\hat{a}(n)$ and $\hat{b}(n)$) are used in the actor-critic (resp.~the critic-actor) algorithm. 
%Since $\nu_1(i,n)\leq n$, $\forall i$, and similarly,
%$\nu_2(i,a,n)\leq n$, $\forall (i,a)$ tuples, we have for $n$ sufficiently large, $a(n) \leq \bar{a}(n)$, and $b(n) \leq \bar{b}(n)$, respectively.
For $x>0$, define 
\begin{eqnarray*}
N_1(n,x) &=& \min\{ m>n \mid \sum_{i=n+1}^{m} \bar{a}(i) \geq x\},\\
N_2(n,x) &=& \min\{ m>n \mid \sum_{i=n+1}^{m} \bar{b}(i) \geq x\},
\end{eqnarray*}
{where $\bar{a}(n) \equiv a(\nu_1(i,n))$ and $\bar{b}(n) \equiv b(\nu_2(i,a,n))$, respectively.}
We make the following assumptions.
\begin{assum}[Step-Sizes]
	\label{a1}
	$a(n),b(n),n\geq 0$ {are eventually non-increasing} and satisfy the following conditions:
	\begin{itemize}
		\item[(i)] ${\displaystyle \sum_n a(n)=\sum_n b(n) =\infty}$.
		\item[(ii)] ${\displaystyle \sum_n (a(n)^2 + b(n)^2)<\infty}$. 
		\item[(iii)] $a(n)=o(b(n))$.
		\item[(iv)] For any $x\in (0,1)$, ${\displaystyle \sup_n \frac{a([xn])}{a(n)}}$, ${\displaystyle \sup_n \frac{b([xn])}{b(n)}<\infty},$ where $[xn]$ denotes the integer part of $xn$.
	   \item[(v)] For any $x\in (0,1)$, ${\displaystyle A(n) := \sum_{i=0}^{n} a(i)}$, ${\displaystyle B(n) := \sum_{i=0}^{n} b(i)}$, $n\geq 0$, we have
	   ${\displaystyle \frac{A([yn])}{A(n)}}$, ${\displaystyle \frac{B([yn])}{B(n)} \rightarrow 1}$, as $n\rightarrow\infty$, 
	   uniformly over $y\in [x,1]$.
		\end{itemize}
\end{assum}

\begin{assum}[Frequent Updates]
	\label{a2}	
	\begin{itemize}
		\item[(i)]  
	  There exists a constant $\kappa >0$ such that the following conditions hold almost surely: 
${\displaystyle 
\liminf_{n\rightarrow\infty} \frac{\nu_1(i,n)}{n} \geq \kappa}$, $\forall i\in S$, 
${\displaystyle \liminf_{n\rightarrow\infty} \frac{\nu_2(i,a,n)}{n} \geq \kappa}$, $\forall (i,a)\in S\times U$.

\item[(ii)] For $x>0$, the limits
\[\lim_{n\rightarrow\infty} \frac{\sum_{j=\nu_1(i,n)}^{\nu_1(i,N_1(n,x))} a(j)}{\sum_{j=\nu_1(k,n)}^{\nu_1(k,N_1(n,x))} a(j)}, \mbox{   }
\lim_{n\rightarrow\infty} \frac{\sum_{j=\nu_2(i,a,n)}^{\nu_2(i,a,N_2(n,x))} b(j)}{\sum_{j=\nu_2(k,b,n)}^{\nu_2(k,b,N_2(n,x))} b(j)},
\]
exist almost surely for states $i,k$ and state-action tuples $(i,a)$ and $(k,b)$ in the two limits respectively.	
	\end{itemize}
	
	\end{assum}
{%For simplicity, denote $\bar{a}(n) \equiv a(\nu_1(i,n))$ and $\bar{b}(n) \equiv b(\nu_2(i,a,n))$ respectively. 
Since $\nu_1(i,n)\leq n$, $\forall i$, and 
$\nu_2(i,a,n)\leq n$, $\forall (i,a)$ tuples, we have, for $n$ sufficiently large, $a(n) \leq \bar{a}(n)$, and $b(n) \leq \bar{b}(n)$, respectively}. Assumption 1(i)-(ii) are the standard Robbins-Monro conditions for stochastic approximation algorithms (see Ch.~2 of \cite{BorkarB}). Condition (iii) is standard for two time-scale stochastic approximations (Section 8.1, \cite{BorkarB}). Conditions (iv), (v) and Assumption 2 are required for handling the asynchronous nature of the iterates (see Ch.~6 of \cite{BorkarB}, also \cite{BorkarA}). {Examples of step-sizes that satisfy Assumptions 1 and 2(ii) include (see \cite{KondaB}) (i) $a(n)=1/(n+1), b(n)= \log(n+2)/(n+2)$, (ii) $a(n)=1/((n+2)\log(n+2))$, $b(n)=1/(n+1)$, etc., for $n\geq 0$. %with suitable modifications for $n=0,1$ in the last two satisfy Assumptions 1 and 2(ii). 
Assumption 2(i) is satisfied for instance if $\{Z_n\}$ is ergodic and $\{Y_n\}$ is ergodic under any given policy dictated by $\{\theta_n\}$. If these are independently sampled from some distributions, the same should assign positive probabilities to all state and state-action components. }

%Both subsections that follow draw heavily upon the proof techniques of \cite{KondaB}, we omit many details for the sake of brevity.

\subsection{Convergence of the Faster Recursion}

It has been shown in Lemma 4.6 of \cite{KondaB} that $\bar{a}(n), \bar{b}(n) \rightarrow 0$ as $n\rightarrow \infty$. Further, $\bar{a}(n) =o(\bar{b}(n))$. 
Since our $V$-update proceeds on the slower timescale, we let $V_n \approx V$ for the analysis of the faster recursion.
For all $(i,a)$, let 
\begin{eqnarray*}
g_{ia}(V) &=& V(i) -\sum_{j} p(i,a,j)(g(i,a,j) + \gamma V(j)),\\
k_{ia}(V) &=& \sum_{j} p(i,a,j)(g(i,a,j) + \gamma V(j)) - V(i).
\end{eqnarray*}
Let ${\mathcal{F}}_n \stackrel{\triangle}{=} \sigma(\xi_m(i,a),\eta_m(i,a),\phi_m(i), m<n;$ $Y_m, Z_m, V_m,$ $\theta_m(i,a),$ $m\leq n, i\in S, a\in U), n \geq 0,$ denote an increasing family of associated sigma fields. Let
\[
\begin{split}
&\mu^2_n(i,a) = I\{Z_n=(i,a)\}, \mbox{ } \forall (i,a) \in S\times U,\\
&\mu^1_n(i) = I\{Y_n=i\}, \mbox{ } \forall i \in S,\\
&\M^{\mu^2_n} = \mbox{diag}(\mu^2_n(i,a), (i,a)\in S\times U),\\
&\M^{\mu^1_n} = \mbox{diag}(\mu^1_n(i), i\in S).
\end{split}
\]
{For $n\geq 0$, define the $\{\F_n\}$-adapted sequences
\begin{eqnarray*}
M_{n}(i,a) &=& V(i) - g(i,a,\eta_{n-1}(i,a)) -  \gamma V(\eta_{n-1}(i,a))\\
&& + \ k_{ia}(V), \\
N_{n}(i) &=& g(i,\phi_{n-1}(i),\xi_{n-1}(i,\phi_{n-1}(i) \linebreak )) -V(i)\\
&& + \gamma V(\xi_{n-1}(i,\phi_{n-1}(i)))
- \sum_{a\in U} \pi_{\theta_n}(i,a) k_{ia}(V).
\end{eqnarray*}}
		
\begin{lem}
	\label{lem1}
The sequences 
$${\displaystyle	\sum_{n=1}^{m} b(\nu_2(i,a,n))M_{n+1}(i,a)} \linebreak \displaystyle {I\{Z_n=(i,a)\}}, \ m\geq 1, \ \mbox{and}$$
 $${\displaystyle	\sum_{n=1}^{m} a(\nu_1(i,n))N_{n+1}(i) I\{Y_n} \linebreak \displaystyle{=i\}}, \ m\geq 1,$$ 
converge almost surely as $m\rightarrow\infty$.
\end{lem}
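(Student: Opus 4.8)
My plan is to recognise each of the two partial sums as a martingale transform of a bounded martingale-difference sequence and to invoke the $L^2$ (square-integrable) martingale convergence theorem (see the appendix of \cite{BorkarB}). The value function $V$ is held fixed throughout the statement, so by finiteness of $S$ and $U$ and boundedness of $g$ one has $|M_{n+1}(i,a)|, |N_{n+1}(i)| \le C$ for a deterministic constant $C$. I would use repeatedly that the weights $b(\nu_2(i,a,n))$, $a(\nu_1(i,n))$ and the indicators $I\{Z_n=(i,a)\}$, $I\{Y_n=i\}$ are $\mathcal{F}_n$-measurable, since $Y_n, Z_n$ and the counters $\nu_1,\nu_2$ are; hence it suffices to control the conditional means and conditional second moments of $M_{n+1}$ and $N_{n+1}$ alone.

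For the first sum, $\eta_n(i,a)$ is independent of $\mathcal{F}_n$ with law $p(i,a,\cdot)$, so $E[g(i,a,\eta_n(i,a))+\gamma V(\eta_n(i,a))\mid\mathcal{F}_n]=\sum_j p(i,a,j)(g(i,a,j)+\gamma V(j))$, and by the definition of $k_{ia}(V)$ this gives $E[M_{n+1}(i,a)\mid\mathcal{F}_n]=0$. Thus the summands form a martingale-difference sequence, whose conditional quadratic variation is bounded by $C^2\sum_n b(\nu_2(i,a,n))^2 I\{Z_n=(i,a)\}$. On the visit times $\{n:Z_n=(i,a)\}$ the counter $\nu_2(i,a,n)$ runs through $0,1,2,\dots$, so reindexing gives $C^2\sum_k b(k)^2<\infty$ by Assumption~\ref{a1}(ii), and $L^2$ martingale convergence yields a.s.\ convergence of the first sum.

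The second sum is the delicate one and is where I expect the main obstacle. The innovation carried by $\phi_n,\xi_n$ has conditional mean $\sum_a\pi_{\theta_n}(i,a)k_{ia}(V)$ (the conditional law of $\phi_n(i)$ is $\pi_{\theta_n}(i,\cdot)$ and, independently, $\xi_n(i,a)\sim p(i,a,\cdot)$), whereas $N_{n+1}(i)$ subtracts the term evaluated at $\theta_{n+1}$. I would therefore split $N_{n+1}=\hat N_{n+1}-r_{n+1}$, where $\hat N_{n+1}$ replaces $\theta_{n+1}$ by $\theta_n$ in the subtracted term and $r_{n+1}:=\sum_a(\pi_{\theta_{n+1}}(i,a)-\pi_{\theta_n}(i,a))k_{ia}(V)$. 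A short computation using $\sum_j p(i,a,j)(g+\gamma V)=k_{ia}(V)+V(i)$ shows $E[\hat N_{n+1}\mid\mathcal{F}_n]=0$, so $\sum_n a(\nu_1(i,n))\hat N_{n+1}(i)I\{Y_n=i\}$ converges a.s.\ by the same bounded-martingale argument, now using $\sum_k a(k)^2<\infty$.

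It then remains to show that the remainder $\sum_n a(\nu_1(i,n))r_{n+1}I\{Y_n=i\}$ converges, and the hard part will be establishing its absolute summability. The projection $\Gamma_{\theta_0}$ keeps $\theta_n$ bounded and the softmax map $\theta\mapsto\pi_\theta$ is Lipschitz, while $\theta_{n+1}(i,a)-\theta_n(i,a)$ is nonzero only when $Z_n=(i,a)$, with magnitude $O(b(\nu_2(i,a,n)))$; hence $|r_{n+1}|\le C'\sum_a b(\nu_2(i,a,n))I\{Z_n=(i,a)\}$ and the remainder sum is dominated by $C'\sum_a\sum_n a(\nu_1(i,n))\,b(\nu_2(i,a,n))\,I\{Y_n=i,\,Z_n=(i,a)\}$. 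The key observation is that on the event $\{Y_n=i,\,Z_n=(i,a)\}$ \emph{both} counters increment, so along these times $\nu_1(i,\cdot)$ and $\nu_2(i,a,\cdot)$ each take strictly increasing, hence distinct, values; applying Cauchy--Schwarz and reindexing the two factors separately bounds the inner sum by $(\sum_k a(k)^2)^{1/2}(\sum_k b(k)^2)^{1/2}<\infty$. This gives a.s.\ absolute summability of the remainder, and combining the three pieces proves convergence of the second sum. The crux is precisely this last step: reconciling the $\theta_{n+1}$-versus-$\theta_n$ discrepancy through the Lipschitz bound and the simultaneous reindexing of the two asynchronous step-size streams.
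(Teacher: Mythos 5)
Your proof is correct and follows essentially the same route the paper takes: its one-line proof defers to Lemma 4.5 of Konda--Borkar, which is precisely the square-integrable-martingale argument you give (bounded increments, $\mathcal{F}_n$-measurable weights and indicators, conditional quadratic variation reindexed through the visit counters to $\sum_k a(k)^2,\ \sum_k b(k)^2<\infty$). Your extra treatment of the $\pi_{\theta_{n+1}}$-versus-$\pi_{\theta_n}$ mismatch in $N_{n+1}$, via the Lipschitz bound on the softmax and the Cauchy--Schwarz/reindexing estimate on $\sum_n a(\nu_1(i,n))\,b(\nu_2(i,a,n))\,I\{Y_n=i,\,Z_n=(i,a)\}$, is a correct patch of a detail that the paper's citation glosses over.
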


\begin{proof} 
	Follows as in Lemma 4.5 of \cite{KondaB}.
	\end{proof}

The analysis of \eqref{critic2} proceeds by rewriting it as:
\begin{eqnarray*}
\theta_{n+1}(i,a) &=& \Gamma_{\theta_0}(\theta_n(i,a) + \bar{b}(n) \M^{\mu^2_n}
[V(i) \\
&&- \ g(i,a,\eta_n(i,a))-\gamma V(\eta_n(i,a))]).
\end{eqnarray*}

%\begin{equation*}
%\label{critic1-1}
%\begin{split}
%V_{n+1}(i) =& V_n(i) + \bar{a}(n) M^{\mu^1_n}[g(i,\phi_n(i),\xi_n(i,\phi_n(i))) \\
%&+ \gamma V_n(\xi_n(i,\phi_n(i))) - V_n(i)].
%\end{split}
%\end{equation*}
One may further rewrite it as follows:
\begin{equation*}
\label{actor1-2}
\begin{split}
\theta_{n+1}(i,a) =& \Gamma_{\theta_0}(\theta_n(i,a) + \bar{b}(n) \M^{\mu^2_n} [g_{ia}(V)+ M_{n+1}(i,a)]).
\end{split}
\end{equation*}
%In what follows, we shall closely mimic the arguments of section 5.4 of \cite{KondaB}. 
%\begin{equation*}
%\label{critic1-2}
%\begin{split}
%V_{n+1}(i) =& V_n(i) - \bar{a}(n) M^{\mu^1_n}\left[\sum_{a\in U} \pi_{\theta_n}(i,a) K_{ia}(\theta_n,V_n)\right.\\
%&\left.+ N_{n+1}(i,a)\right].
%\end{split}
	%g(i,\phi_n(i),\xi_n(i,\phi_n(i))) + \gamma V_n(\xi_n(i,\phi_n(i)))	
	%- V_n(i)].
%\end{equation*}
%where $M_{n+1}(i,a) = V_n(i) - g(i,a,\eta_n(i,a))
%-\gamma V_n(\eta_n(i,a)) - K_{ia}(\theta_n,V_n), n\geq 0$ can be seen to be a martingale difference sequence.
%Similarly, $N_{n+1}(i,a) = g(i,\phi_n(i),\xi_n(i,\phi_n(i))) + \gamma V_n(\xi_n(i,\phi_n(i)))	
%- V_n(i) - \sum_{a\in U} \pi_{\theta_n}(i,a) K_{ia}(\theta_n,V_n)$, $n\geq 0$ is a martingale difference sequence as well.
Let
\[\bar{\Gamma}_{\theta_0}(g_{ia}(V)) = \lim_{\Delta\rightarrow 0} \left(\frac{\Gamma_{\theta_0}(\theta(i,a) + \Delta g_{ia}(V)) - \theta(i,a)}{\Delta}\right).
\]
Consider now the ODE
\begin{equation}
	\label{ode-a}
	\dot{\theta}(i,a) = \bar{\Gamma}_{\theta_0}(g_{ia}(V)).
\end{equation}
Let 
\[
\gamma_{ia}(\theta) =\left\{
\begin{array}{ll}
0 & \mbox{ if } \theta(i,a) = \theta_0 \mbox{ and } k_{ia}(V) \leq 0,\\
   & \mbox{ or } \theta(i,a) = -\theta_0 \mbox{ and } k_{ia}(V) \geq 0,\\
1  & \mbox { otherwise.}
\end{array}
\right.
\]
Then it can be seen (see Sec.5.4 of \cite{KondaB}) that 
\[\bar{\Gamma}_{\theta_0}(g_{ia}(V)) = -k_{ia}(V)\gamma_{ia}(\theta).
\]
Thus, the ODE \eqref{ode-a} takes the form:
\begin{equation}
	\label{ode-b}
	\dot{\theta}(i,a) = -k_{ia}(V)\gamma_{ia}(\theta).
\end{equation}

From the parameterised form of the policy \eqref{gibbs}, it follows from \eqref{ode-b} that for $(i,a)\in S\times U$, 
\begin{equation*}
	%\label{policy-der}
	\dot{\pi}_\theta(i,a) = \pi_\theta(i,a)(\dot{\theta}(i,a) -  \sum_{b\in U} \pi_\theta(i,b) \dot{\theta}(i,b)),
\end{equation*}
\[
%\label{policy-der2}
%\begin{split}
=  \pi_\theta(i,a) (\bar{\Gamma}_{\theta_0}(g_{ia}(V)) - \sum_{b\in U} \pi_\theta(i,b)
\bar{\Gamma}_{\theta_0}(g_{ib}(V))),
%\end{split}
\]
\[
=  -\pi_\theta(i,a) (k_{ia}(V)\gamma_{ia}(\theta) - \sum_{b\in U} \pi_\theta(i,b)
k_{ib}(V)\gamma_{ib}(\theta)).
\]
 % In a similar manner as Lemma 5.13 of \cite{KondaB}, we have:

This is a replicator dynamics whose stable attractors are sets consisting of the $\pi_\theta$ that minimize
${\displaystyle \sum_{(i,a)\in S\times U} k_{ia}(V)}$, i.e., $\theta$ that satisfy $\theta(i,a) =\theta_0$ for at least
one $a\in \arg\min(k_{i\cdot}(V))$ and $\theta(i,a)=-\theta_0$ otherwise (see Lemma 5.10 of \cite{KondaB})\footnote{Under reasonable conditions on noise, it is known that stochastic approximation converges a.s.\ to its stable attractors, see, e.g., section 3.4 of \cite{BorkarB} and the references therein.}. Let $\hat{\pi}$ denote one such policy and
$\hat{V}$ the corresponding value function. 
Then
\[\hat{V}(i) = \sum_{a\in U} \hat{\pi}(i,a) \sum_{j}p(i,a,j)(g(i,a,j) + \gamma \hat{V}(j)), \mbox{ } \forall i.
\]
%\begin{lem}
%	\label{liapunov}
%	The functions ${\displaystyle \sum_{i\in S} V(i)}$ are strict Liapunov functions for the ODE \eqref{ode-a}. 
%\end{lem}

%\begin{pf}

%\end{pf} 
Define a policy $\pi^*$ that corresponds to $\theta_0\rightarrow\infty$
in the above. Then this is an optimal policy and the corresponding value function $V^*$ satisfies the corresponding
dynamic programming equation. 
\[{V^*}(i) = \sum_{a\in U} {\pi^*}(i,a) \sum_{j}p(i,a,j)(g(i,a,j) + \gamma {V^*}(j)), \mbox{ } \forall i.
\]

\begin{prop}
	\label{prop1}
	We have 
	\[
	\|\hat{V}-V^*\|_\infty \leq \frac{\max_i\|\hat{\pi}(i,\cdot)-\pi^*(i,\cdot)\|_1\max_i\|\bar{g}(i,\cdot)\|_\infty}{(1-\gamma)^2},
	\]
	where ${\displaystyle \bar{g}(i,a) = \sum_{j}p(i,a,j)g(i,a,j)}$.
	\end{prop}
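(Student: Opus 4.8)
The plan is to view both $\hat{V}$ and $V^*$ as fixed points of the respective policy-evaluation (affine Bellman) operators and to control their difference through a standard resolvent-plus-perturbation argument. For an SRP $\pi$ define $(T_\pi V)(i) = \sum_{a} \pi(i,a)\sum_j p(i,a,j)(g(i,a,j)+\gamma V(j))$, which I would write as $T_\pi V = \bar{g}_\pi + \gamma P_\pi V$, where $\bar{g}_\pi(i)=\sum_a \pi(i,a)\bar{g}(i,a)$ and $P_\pi$ is the (row-stochastic) transition matrix induced by $\pi$. By the two displayed Bellman equations immediately preceding the statement, $\hat{V} = T_{\hat{\pi}}\hat{V}$ and $V^* = T_{\pi^*}V^*$. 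First I would write
\begin{equation*}
\hat{V} - V^* = T_{\hat{\pi}}\hat{V} - T_{\pi^*}V^* = (T_{\hat{\pi}}\hat{V} - T_{\hat{\pi}}V^*) + (T_{\hat{\pi}}V^* - T_{\pi^*}V^*),
\end{equation*}
and use the affinity of $T_{\hat{\pi}}$ to identify the first bracket with $\gamma P_{\hat{\pi}}(\hat{V} - V^*)$. Rearranging yields $(I-\gamma P_{\hat{\pi}})(\hat{V} - V^*) = T_{\hat{\pi}}V^* - T_{\pi^*}V^*$, hence $\hat{V} - V^* = (I-\gamma P_{\hat{\pi}})^{-1}(T_{\hat{\pi}}V^* - T_{\pi^*}V^*)$.

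Next I would estimate each factor in the sup norm. Since $P_{\hat{\pi}}$ is row-stochastic, $\|P_{\hat{\pi}}^k\|_\infty = 1$ for every $k$, so the Neumann series gives $\|(I-\gamma P_{\hat{\pi}})^{-1}\|_\infty \leq \sum_{k\geq 0}\gamma^k = 1/(1-\gamma)$. For the perturbation term, evaluating both operators at the common point $V^*$ gives
\begin{equation*}
(T_{\hat{\pi}}V^* - T_{\pi^*}V^*)(i) = \sum_{a}(\hat{\pi}(i,a)-\pi^*(i,a))\,q(i,a), \quad q(i,a):=\bar{g}(i,a)+\gamma\sum_j p(i,a,j)V^*(j),
\end{equation*}
and Hölder's inequality ($\ell_1$--$\ell_\infty$ duality) bounds the $i$th component by $\|\hat{\pi}(i,\cdot)-\pi^*(i,\cdot)\|_1\,\|q(i,\cdot)\|_\infty$.

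Finally I would bound $\|q(i,\cdot)\|_\infty$. Writing $G:=\max_i\|\bar{g}(i,\cdot)\|_\infty$ and using $V^* = T_{\pi^*}V^*$, the a priori estimate $\|V^*\|_\infty \leq G + \gamma\|V^*\|_\infty$ gives $\|V^*\|_\infty \leq G/(1-\gamma)$; substituting this into the definition of $q$ yields $\|q(i,\cdot)\|_\infty \leq G + \gamma G/(1-\gamma) = G/(1-\gamma)$. Multiplying the resolvent bound by the perturbation bound then produces the two factors of $(1-\gamma)^{-1}$ and the claimed $(1-\gamma)^2$ in the denominator. I expect the only delicate point to be bookkeeping of the constants: one must evaluate the operator difference at the single point $V^*$ (absorbing the contraction entirely into $P_{\hat{\pi}}$, not $P_{\pi^*}$), and then verify that the telescoping $G+\gamma G/(1-\gamma)$ collapses exactly to $G/(1-\gamma)$ rather than a looser constant, so that the bound is tight in the stated form.
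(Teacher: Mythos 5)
Your proposal is correct and follows essentially the same route as the paper: both add and subtract the evaluation of $V^*$ under $\hat{\pi}$, bound the resulting policy-perturbation term by $\max_i\|\hat{\pi}(i,\cdot)-\pi^*(i,\cdot)\|_1$ times $\max_i\|\bar g(i,\cdot)\|_\infty/(1-\gamma)$ using $\|V^*\|_\infty\le \max_i\|\bar g(i,\cdot)\|_\infty/(1-\gamma)$, and absorb the contraction into a factor $1/(1-\gamma)$. The only cosmetic difference is that you invert $(I-\gamma P_{\hat{\pi}})$ via a Neumann series where the paper moves $\gamma\|\hat V-V^*\|_\infty$ to the left-hand side of a componentwise triangle-inequality estimate and divides by $1-\gamma$; these are the same computation.
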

\begin{proof}
Note that
\begin{eqnarray}
\lefteqn{|\hat{V}(i) - V^*(i)| \leq \sum_a |\hat{\pi}(i,a)-\pi^*(i,a)| |\bar{g}(i,a)| +}\nonumber \\
&&\gamma \sum_a \hat{\pi}(i,a)\sum_j p(i,a,j) |\hat{V}(j)-V^*(j)| + \nonumber\\
&&\gamma \sum_a |\hat{\pi}(i,a)-\pi^*(i,a)|\sum_{j}p(i,a,j)|V^*(j)|. \label{eq-1}
\end{eqnarray}
Thus,
\[
\|\hat{V}-V^*\|_\infty \leq \max_i \|\hat{\pi}(i,\cdot)-\pi^*(i,\cdot)\|_1\max_i\|\bar{g}(i,\cdot)\|_\infty
\]
\begin{equation}
	\label{eq-2}
	+ \ \gamma \|\hat{V}-V^*\|_\infty + \gamma \max_i \|\hat{\pi}(i,\cdot) - \pi^*(i,\cdot)\|_1\|V^*\|_\infty.
\end{equation}
Now note that by definition
\begin{equation}
	\label{eq-3}
	\|V^*\|_\infty \leq (1-\gamma)^{-1} \max_i\|\bar{g}(i,\cdot)\|_\infty.
\end{equation}
The claim follows upon substituting \eqref{eq-3}  in \eqref{eq-2}.
		\end{proof}

\begin{thm}
	\label{thm-f}
Given $\epsilon>0$, there exists {$\bar{\theta} > 0$} such that $\forall \ \theta_0 >\bar{\theta}$,
$\|\hat{V}-V^*\|_\infty < \epsilon$, i.e., the policy $\hat{\pi}$ is $\epsilon$-optimal.
\end{thm}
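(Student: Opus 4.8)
The plan is to reduce the theorem to a purely policy-level estimate and then feed that into Proposition~\ref{prop1}. Since $S$ and $U$ are finite, the quantities $\max_i\|\bar{g}(i,\cdot)\|_\infty$ and $(1-\gamma)^{-2}$ appearing in the bound of Proposition~\ref{prop1} are finite constants that do not depend on $\theta_0$. Consequently, the theorem will follow once I show
\[
\max_i\|\hat{\pi}(i,\cdot)-\pi^*(i,\cdot)\|_1 \longrightarrow 0 \quad \text{as } \theta_0\to\infty ;
\]
for then, given $\epsilon>0$, it suffices to pick $\bar{\theta}$ large enough that the right-hand side of the inequality in Proposition~\ref{prop1} stays below $\epsilon$ for all $\theta_0>\bar{\theta}$.

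The core step is to establish this policy convergence directly from the Boltzmann form \eqref{gibbs} and the description of the stable attractor. Fix a state $i$ and write $m_i := |\arg\min_a k_{ia}(\hat{V})|$. At the attractor one has $\theta(i,a)=\theta_0$ for each minimizing action and $\theta(i,a)=-\theta_0$ for the rest, so \eqref{gibbs} yields $\hat{\pi}(i,a) = (m_i+(|U|-m_i)e^{-2\theta_0})^{-1}$ for minimizing $a$ and $\hat{\pi}(i,a)=e^{-2\theta_0}(m_i+(|U|-m_i)e^{-2\theta_0})^{-1}$ otherwise. As $\theta_0\to\infty$ these tend to $1/m_i$ and $0$, which is exactly the limiting policy $\pi^*(i,\cdot)$, namely the uniform distribution over the minimizing (greedy) set. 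A one-line estimate of the two expressions gives $\|\hat{\pi}(i,\cdot)-\pi^*(i,\cdot)\|_1 = O(e^{-2\theta_0})$ with a constant depending only on $|U|$, and taking the maximum over the finitely many states preserves both the bound and its uniformity. Combined with the first paragraph this proves the claim.

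The main obstacle is not the elementary Boltzmann algebra but the self-referential definition of the minimizing set: $\hat{\pi}$ is Boltzmann-greedy with respect to $\hat{V}$, while $\hat{V}$ is itself the value function of $\hat{\pi}$, so a priori the set $\arg\min_a k_{ia}(\hat{V})$ used above could drift with $\theta_0$ and need not coincide with the set $\arg\min_a k_{ia}(V^*)$ that defines $\pi^*$. I would handle this by a fixed-point continuity argument: the pair $(\hat{\pi},\hat{V})$ lives in a compact set, any limit point as $\theta_0\to\infty$ satisfies the hard Bellman optimality equation, and since that equation has the unique solution $V^*$, every limit point shares the value $V^*$ and a greedy policy supported on $\arg\min_a Q^*(i,a)$. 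Recalling that $\pi^*$ is defined precisely as this $\theta_0\to\infty$ limit, the minimizing sets of $\hat{\pi}$ and $\pi^*$ therefore agree for all $\theta_0$ large (outright when the greedy action in each state is unique, and after equalizing mass over the tied near-optimal actions in general), so the Boltzmann estimate of the second paragraph applies and the $L^1$ gap genuinely vanishes.
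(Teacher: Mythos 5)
Your proposal is correct and follows the same overall route as the paper: reduce the theorem to the convergence $\hat{\pi}(i,\cdot)\to\pi^*(i,\cdot)$ as $\theta_0\to\infty$ and then invoke Proposition~\ref{prop1}, whose constants $\max_i\|\bar{g}(i,\cdot)\|_\infty$ and $(1-\gamma)^{-2}$ are finite and independent of $\theta_0$. The difference is one of rigor rather than strategy: the paper's proof disposes of the policy-convergence step in a single sentence (``the way $\hat{\pi}$ and $\pi^*$ are defined, we have $\hat{\pi}(i,\cdot)\to\pi^*(i,\cdot)$''), essentially treating it as definitional since $\pi^*$ is \emph{defined} as the $\theta_0\to\infty$ limit, whereas you correctly flag that this step hides a genuine subtlety --- $\hat{\pi}$ is Boltzmann-greedy with respect to $\hat{V}$, which is itself the value function of $\hat{\pi}$, so the greedy sets could in principle drift with $\theta_0$ --- and you resolve it with a compactness/limit-point argument showing every limit point satisfies the Bellman optimality equation, whose solution is unique. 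Two small remarks: the attractor description in the paper puts $\theta(i,a)=\theta_0$ for \emph{at least one} minimizing action rather than all of them, so your $m_i$ should be read as the cardinality of the subset actually selected (this does not affect the $O(e^{-2\theta_0})$ mass estimate on non-minimizing actions, which is what matters); and your fixed-point argument, once carried out, already yields $\hat{V}\to V^*$ directly, making the detour through Proposition~\ref{prop1} technically redundant in your version --- though retaining it gives the explicit quantitative dependence of $\bar{\theta}$ on $\epsilon$ that the qualitative compactness argument does not.
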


\begin{proof}
The way $\hat{\pi}$ and $\pi^*$ are defined, we have $\hat{\pi}(i,\cdot) \rightarrow \pi^*(i,\cdot)$ as
$\theta_0\rightarrow\infty$. The claim now follows from Proposition~\ref{prop1} using the facts
that $\max_i \|\bar{g}(i,\cdot)\|_\infty \leq B <\infty$ for some scalar $B>0$, and that $\gamma \in [0,1)$.
	\end{proof}

%We now state the main convergence result. 

%\begin{thm}
%	\label{main-ac}
%	Given $\epsilon >0$, $\exists \ \bar{\theta} \equiv \bar{\theta}(\epsilon)$ such that for all $\theta_0 \geq \bar{\theta}$, $(\theta_n, V_n)$, $n\geq 0$, governed according to \eqref{actor1}-\eqref{critic1}, converges almost surely to the set
%	$\{(\theta,V_{\pi_\theta}) \mid V_\gamma (i) \leq V_{\pi_\theta}(i) \leq  V_\gamma (i) + \epsilon, \forall \in S\}$, where $V_\gamma(i), i\in S$ is the unique solution to the Bellman equation
%	\[ V(i) = \min_a \left(\sum_j p(i,a,j)(g(i,a,j) + \gamma V(j)\right), \mbox{ } i\in S.
%	\]
%\end{thm}
%\begin{pf}
%	Follows as in Theorem 5.13 of \cite{KondaB}.
%	\end{pf} 

\subsection{Convergence of the Slower Recursion}

The policy update in this algorithm is on the faster time scale and sees the value update as quasi-static. 
We have the following important result.
\begin{lem}
	\label{lemma-caac}
	Given $\epsilon>0$, there exists $\bar{\theta}$ sufficiently large such that for $\theta_0>\bar{\theta}$
	and $V \approx V_n$ (i.e., $V$ is tracking $V_n$ on a slower time scale), recursion \eqref{critic2} on the slower timescale satisfies
	$$\max_i\left|\min_u\sum_jp(i,u,j)(g(i,u,j) + \gamma V(j)) - V(i))\right| < \epsilon,$$
	for $n \geq 0$.
\end{lem}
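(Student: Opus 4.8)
The plan is to read \eqref{critic2} as the slow component of an asynchronous two time-scale stochastic approximation in which the fast actor has already equilibrated, and to show that the resulting slow o.d.e.\ is a (softened) value iteration whose fixed point has a vanishing Bellman-optimality residual as $\theta_0\to\infty$. First I would use the time-scale separation $a(n)=o(b(n))$ (Assumption~\ref{a1}(iii)) together with the faster-recursion analysis to replace, on the slow time scale, the actor iterate $\pi_{\theta_n}$ by the quasi-equilibrated Gibbs policy $\hat{\pi}$ which, in the $\pm\theta_0$ sense, is greedy with respect to the one-step look-ahead $B_i^V(a):=\sum_j p(i,a,j)(g(i,a,j)+\gamma V(j))$ at the current quasi-static $V=V_n$. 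Conditioning \eqref{critic2} on $\F_n$ then decomposes its increment into the drift $G_{\theta_0}V_n(i)-V_n(i)$, where $G_{\theta_0}V(i):=\sum_a\hat{\pi}(i,a)B_i^V(a)$, plus the martingale difference $N_{n+1}(i)$, whose step-size-weighted partial sums converge a.s.\ by Lemma~\ref{lem1}. Hence $V_n$ tracks the o.d.e.\
\[
\dot V(i)=\lambda_i(t)\,(G_{\theta_0}V(i)-V(i)),
\]
with per-component rates $\lambda_i(t)$ bounded away from $0$ by the frequent-update condition (Assumption~\ref{a2}(i)); the asynchronous framework of Ch.~6 of \cite{BorkarB} applies via Assumptions~\ref{a1}(iv)--(v) and \ref{a2}(ii).

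The crux is to observe that $G_{\theta_0}$ is a $\gamma$-contraction in the sup-norm, uniformly in $\theta_0\ge 0$. Writing $p_0$ and $q_0$ for the Gibbs weights placed on the minimising action and on each of the other $|U|-1$ actions, so that $p_0+(|U|-1)q_0=1$, one has the exact identity
\[
G_{\theta_0}V(i)=(p_0-q_0)\min_a B_i^V(a)+q_0\sum_a B_i^V(a).
\]
Since $V\mapsto\min_a B_i^V(a)$ is $\gamma$-Lipschitz and $V\mapsto\sum_a B_i^V(a)$ is $\gamma|U|$-Lipschitz in the sup-norm, the Lipschitz constant of $G_{\theta_0}$ is at most $\gamma[(p_0-q_0)+q_0|U|]=\gamma(p_0+(|U|-1)q_0)=\gamma<1$. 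Consequently $G_{\theta_0}$ has a unique fixed point $\bar V_{\theta_0}$ with $\|\bar V_{\theta_0}\|_\infty\le(1-\gamma)^{-1}\max_i\|\bar{g}(i,\cdot)\|_\infty$ uniformly in $\theta_0$, and this fixed point is the globally asymptotically stable equilibrium of the o.d.e., so the contraction-plus-asynchrony argument yields $V_n\to\bar V_{\theta_0}$ a.s. This is precisely the value-iteration emulation advertised in the introduction.

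It remains to bound the Bellman-optimality residual at the limit. Writing $B_i(a):=B_i^{\bar V_{\theta_0}}(a)$ and using $\bar V_{\theta_0}=G_{\theta_0}\bar V_{\theta_0}$ together with $1-p_0+q_0=|U|q_0$, the identity above gives
\[
\min_a B_i(a)-\bar V_{\theta_0}(i)=-q_0\sum_a(B_i(a)-\min_b B_i(b)),
\]
whence, bounding each span $\max_a B_i(a)-\min_a B_i(a)$ by $2(1-\gamma)^{-1}\max_i\|\bar{g}(i,\cdot)\|_\infty$ (uniform in $\theta_0$),
\[
\max_i\Big|\min_a B_i(a)-\bar V_{\theta_0}(i)\Big|\le 2q_0|U|(1-\gamma)^{-1}\max_i\|\bar{g}(i,\cdot)\|_\infty.
\]
As $q_0=e^{-\theta_0}/(e^{\theta_0}+(|U|-1)e^{-\theta_0})\to0$ when $\theta_0\to\infty$, choosing $\bar\theta$ so that the right-hand side is below $\epsilon$ for all $\theta_0>\bar\theta$ establishes the claim, in agreement with the $\epsilon$-optimality already obtained in Theorem~\ref{thm-f}.

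I expect the main obstacle to be the rigorous justification of the first paragraph rather than the residual estimate: namely, validating the quasi-static substitution $V\approx V_n$ and the equilibration of the \emph{projected} actor recursion through the combined two time-scale (Ch.~8 of \cite{BorkarB}) and asynchronous (Ch.~6 of \cite{BorkarB}) analyses, and controlling the non-smoothness of the greedy selection inside $G_{\theta_0}$ as the minimising action switches with $V$. Once the limiting o.d.e.\ is identified, the contraction property of the softmin-greedy backup $G_{\theta_0}$ is the decisive simplification: it turns the slow recursion into a genuine perturbed value iteration whose limit lies within $O(q_0)$ of $V^*$, which is what drives the residual bound.
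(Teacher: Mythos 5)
Your proposal is correct in substance, but it takes a genuinely different (and far more explicit) route than the paper, whose entire proof of this lemma is a one-line citation of Lemma 5.12 of Konda--Borkar. The decisive ingredient you supply is the exact identity $G_{\theta_0}V(i)=(p_0-q_0)\min_a B_i^V(a)+q_0\sum_a B_i^V(a)$: it shows at once that the soft-greedy backup is a $\gamma$-contraction \emph{uniformly} in $\theta_0$ (so existence, uniqueness and uniform boundedness of $\bar V_{\theta_0}$, and stability of the slow o.d.e., come for free), and it yields the quantitative residual bound $O(q_0)$ with $q_0=e^{-\theta_0}/(e^{\theta_0}+(|U|-1)e^{-\theta_0})$, hence an explicit $\bar\theta(\epsilon)$ --- none of which the paper makes visible. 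Two caveats. First, the lemma as actually invoked in the paper controls the per-step drift error $\sum_a\pi_{\theta_n}(i,a)B_i^{V_n}(a)-\min_u B_i^{V_n}(u)$ along the trajectory, whereas you bound the Bellman residual at the fixed point $\bar V_{\theta_0}$, which presupposes the convergence $V_n\to\bar V_{\theta_0}$ (really the content of Theorem~\ref{main-ca}); fortunately your identity gives, for \emph{any} bounded $V$, $\sum_a\hat\pi(i,a)B_i^V(a)-\min_a B_i^V(a)=q_0\sum_a\bigl(B_i^V(a)-\min_b B_i^V(b)\bigr)$, so the same algebra delivers the trajectory version once a.s.\ boundedness of $\{V_n\}$ is in hand --- this is a repackaging issue, not a gap. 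Second, your $p_0,q_0$ bookkeeping assumes a unique minimiser; with ties the attractor places $\theta_0$ on at least one minimiser and the estimate only improves, but this should be said. The remaining analytic burden --- validating the quasi-static substitution and the asynchronous two-time-scale tracking --- is precisely what you flag as open, and it is also precisely what the paper outsources to Konda--Borkar, so on that point you are not below the paper's own standard of rigour.
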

\begin{proof}
	Follows as in Lemma 5.12 of \cite{KondaB}. 
\end{proof}

It follows that the iteration for $V_n$ on the slower time scale satisfies
\begin{eqnarray*}
&V_{n+1}(i) \approx V_n(i) + \bar{a}(n)\min_u[ \sum_j(p(i,u,j)g(i,u,j)\\
&+ \gamma V_n(j))- V_n(i) + N_{n+1}(i))],
\end{eqnarray*}
where, for any $i\in S, n \geq 0$, 
\begin{eqnarray*}
	&N_{n+1}(i) := g(i,\phi_n(i),\xi_n(i,\phi_n(i))) + \gamma V_n(\xi_n(i,\phi_n(i)))  \\
	&- E[g(i,\phi_n(i),\xi_n(i,\phi_n(i))) + \gamma V_n(\xi_n(i,\phi_n(i)))|\mathcal{F}_n],
\end{eqnarray*}
is a martingale difference sequence, {with the change from the previous definition of $N_n$ being that we use $V_n(\cdot)$ here in place of $V(\cdot)$}  
The approximate equality `$\approx$' accounts for the error
\[
\begin{split}
E[g(i,\phi_n(i),\xi_n(i,\phi_n(i))) + \gamma V_n(\xi_n(i,\phi_n(i)))|\mathcal{F}_n]\\
- \min_u[ \sum_j(p(i,u,j)g(i,u,j) + \gamma V_n(j))] ,
\end{split}
\]
which is seen from Lemma~\ref{lemma-caac} to be $O(\epsilon)$.

We thus have a result similar to Theorem 5.13, \cite{KondaB}:

\begin{thm}
	\label{main-ca}
	Given $\epsilon >0$, {$\exists \ \bar{\theta} \equiv \bar{\theta}(\epsilon) > 0$} such that for all $\theta_0 \geq \bar{\theta}$, $(V_n, \theta_n)$, $n\geq 0$, governed according to \eqref{critic2}-\eqref{actor2}, converges almost surely to the set
$$\{(V_{\pi_\theta},\theta) \mid V_\gamma (i) \leq V_{\pi_\theta}(i) \leq  V_\gamma (i) + \epsilon, \forall i \in S\},$$ 
where $V_\gamma(i), i\in S$ is the unique solution to the Bellman equation (\ref{be}).
%\[ V(i) = \min_a \left(\sum_j p(i,a,j)(g(i,a,j) + \gamma V(j)\right), \mbox{ } i\in S.\]
\end{thm}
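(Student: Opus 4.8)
The plan is to assemble the two-timescale ODE argument so that, after the fast $\theta$-recursion has settled, the slow recursion \eqref{critic2} is seen as a stochastic approximation driven by the value-iteration Bellman operator. First I would invoke the faster-timescale analysis already carried out: since $a(n) = o(b(n))$, on the natural timescale of the $\theta$-iterates the value estimate $V_n$ is quasi-static, and by the replicator-dynamics argument culminating in Lemma \ref{lemma-caac} the policy $\pi_{\theta_n}$ is driven, for $\theta_0 > \bar\theta$, to within $O(\epsilon)$ of the greedy policy with respect to the current $V_n$. Consequently the conditional mean of the bracketed increment in \eqref{critic2}, namely $\sum_a \pi_{\theta_n}(i,a)\sum_j p(i,a,j)(g(i,a,j)+\gamma V_n(j)) - V_n(i)$, is within $O(\epsilon)$ of $(TV_n)(i) - V_n(i)$, where $(TV)(i) := \min_u \sum_j p(i,u,j)(g(i,u,j)+\gamma V(j))$ is the Bellman optimality operator appearing in \eqref{be}.

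Next I would write the slow recursion in the standard asynchronous stochastic-approximation form already displayed before the theorem, $V_{n+1}(i) \approx V_n(i) + \bar a(n)[\,(TV_n)(i) - V_n(i) + e_n(i) + N_{n+1}(i)\,]I\{Y_n=i\}$, where $N_{n+1}$ is the martingale difference defined above and $\|e_n\|_\infty = O(\epsilon)$ uniformly in $n$ by Lemma \ref{lemma-caac}. The noise conditions needed for the ODE method follow from Lemma \ref{lem1} together with the square-summability in Assumption \ref{a1}(ii); the asynchronous, off-policy character, in which different components are updated at the empirical rates $\nu_1(i,\cdot)$, is handled by Assumption \ref{a1}(iv)-(v) and Assumption \ref{a2}, which guarantee that all components are updated comparably often and that the relative clocks converge, so the asynchronous iteration tracks the same synchronous limiting ODE (cf.\ Ch.\ 6 of \cite{BorkarB}). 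Boundedness of $\{V_n\}$, which cannot be taken for granted, I would obtain from the contraction structure: the scaled field $h_\infty(V) = \lim_{c\to\infty}(T(cV)-cV)/c$ satisfies $h_\infty(V)(i) = \min_u \gamma\sum_j p(i,u,j)V(j) - V(i)$, whose associated ODE has the origin as globally asymptotically stable equilibrium since $\gamma < 1$, and the Borkar--Meyn criterion then yields a.s.\ boundedness.

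With boundedness and the noise conditions in hand, the slow iterates a.s.\ track the limiting ODE $\dot V = TV - V + e$ with $\|e\|_\infty = O(\epsilon)$. The unperturbed ODE $\dot V = TV - V$ has $V_\gamma$, the unique solution of \eqref{be}, as its unique globally asymptotically stable equilibrium, because $T$ is a $\gamma$-contraction in $\|\cdot\|_\infty$, so that $\|V(t) - V_\gamma\|_\infty$ is a strict Lyapunov function up to the discount factor. A standard perturbation argument for an ODE with a uniformly small additive term then confines the trajectories, hence the iterates, to an $O(\epsilon/(1-\gamma))$-neighborhood of $V_\gamma$; reabsorbing constants into $\epsilon$ by enlarging $\bar\theta(\epsilon)$ gives $\|V_n - V_\gamma\|_\infty < \epsilon$ eventually a.s. Finally I would translate this into the claimed limit set. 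Writing $\theta$ for the limit of $\theta_n$ and $\pi_\theta$ for the corresponding near-greedy policy, its true value $V_{\pi_\theta}$ satisfies $V_{\pi_\theta} \geq V_\gamma = V^*$ pointwise because $V_\gamma$ is the optimal (minimal-cost) value, while the matching upper bound $V_{\pi_\theta}(i) \leq V_\gamma(i)+\epsilon$ is exactly the $\epsilon$-optimality of the near-greedy policy supplied by Proposition \ref{prop1} and Theorem \ref{thm-f}; hence $(V_n,\theta_n)$ converges to the stated set.

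The step I expect to be the main obstacle is the perturbation/robustness analysis of the slow ODE: one must show that the $O(\epsilon)$ drift error arising from the finite $\theta_0$ (Lemma \ref{lemma-caac}), compounded with the asynchronous relative-clock discrepancies, shrinks the limit set to a genuinely $\epsilon$-thin band around $V_\gamma$ rather than to some unquantified neighborhood, and to do so uniformly, so that the single choice $\bar\theta(\epsilon)$ controls both the policy error and the resulting value-function error simultaneously. Converting the sup-norm proximity of $V_n$ to the two-sided bound on $V_{\pi_\theta}$, in particular securing the exact lower bound $V_{\pi_\theta}\geq V_\gamma$, is where the optimality characterization following \eqref{be} must be used with care.
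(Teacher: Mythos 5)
Your proposal is correct and follows essentially the same route as the paper, which (after establishing Lemma~\ref{lemma-caac} and rewriting the slow recursion \eqref{critic2} as an asynchronous, $O(\epsilon)$-perturbed stochastic value iteration with martingale noise) simply defers to Theorem 5.13 of \cite{KondaB}; your write-up fills in precisely the steps that citation is standing in for --- the contraction/Lyapunov argument for the limiting ODE $\dot V = TV - V$, the perturbation bound, and the translation to the stated limit set via Proposition~\ref{prop1} and Theorem~\ref{thm-f}. The only point at which you go beyond what the paper records is the explicit Borkar--Meyn stability argument for boundedness of $\{V_n\}$, which is a reasonable (and compatible) way to discharge that step.
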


\begin{remark}
Note that Theorems 3 and 5 mention the existence of a $\bar{\theta}>0$. 
One way of getting such a $\bar{\theta}$ woulbe be as follows. Generically, the optimal $\pi(i,\cdot)$ would be at a corner of the probability simplex, corresponding to one control, say $a^*$ being chosen with probability $1$, i.e., $\theta(i,a*) = \infty$ and $\theta(i,a) = -\infty$ for $a \neq a^*$.Thus we can choose $theta_0$ so that for a prescribed small $\epsilon > 0$,
$$\left|\frac{e^{\theta_0}}{e^{\theta_0} + (|S|-1)e^{-\theta_0}} -1\right| < \epsilon.$$
\end{remark}

\begin{figure*}
	\centering
	\includegraphics[width=1\textwidth]{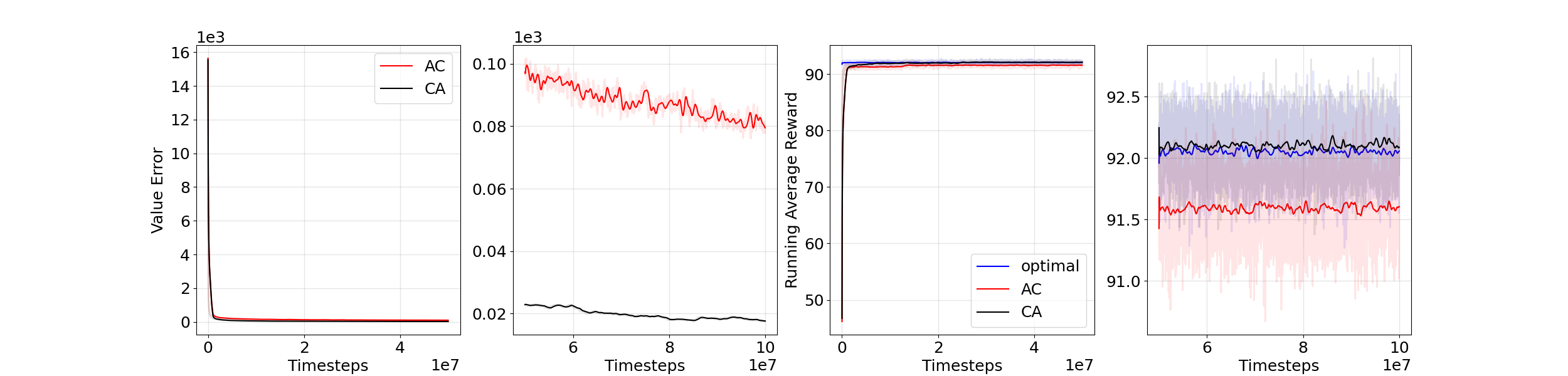}
	\caption{\(|S|=1000,|U|=6,\alpha_1=1,\beta_1=0.55,\alpha_2=1,\beta_2=0.55\)}
	\label{fig1}
\end{figure*}

\begin{figure*}
	\centering
	\includegraphics[width=1\textwidth]{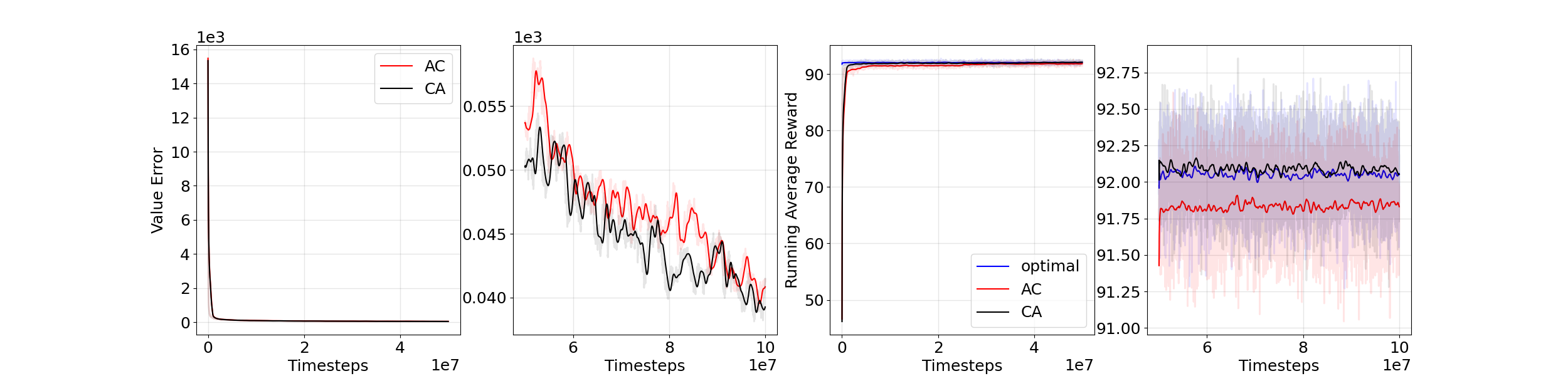}
	\caption{\(|S|=1000,|U|=6,\alpha_1=0.95,\beta_1=0.75,\alpha_2=0.75,\beta_2=0.55\)}
	\label{fig2}
\end{figure*}

\begin{figure*}
	\centering
	\includegraphics[width=1\textwidth]{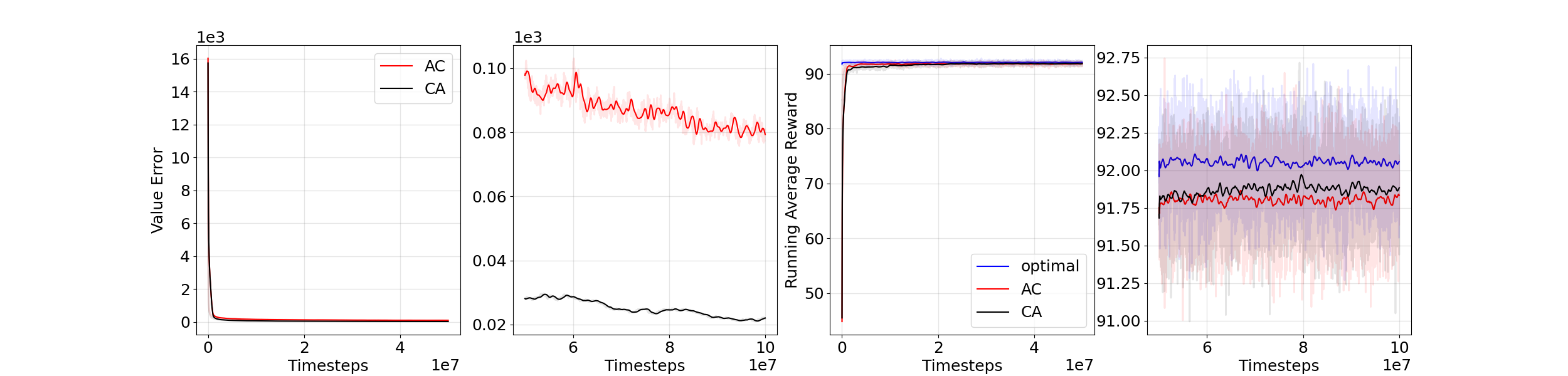}
	\caption{\(|S|=1000,|U|=6,\alpha_1=0.75,\beta_1=0.55,\alpha_2=0.95,\beta_2=0.75\)}
	\label{fig3}
\end{figure*}

\begin{figure*}
	\centering
	\includegraphics[width=1\textwidth]{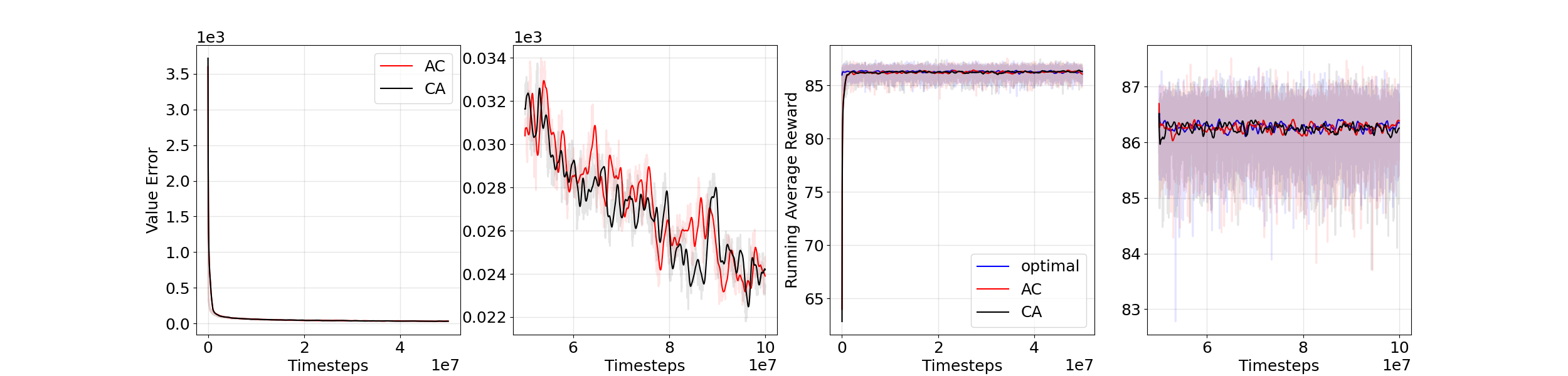}
	\caption{\(|S|=400,|U|=4,\alpha_1=0.95,\beta_1=0.75,\alpha_2=0.75,\beta_2=0.55\)}
	\label{fig4}
\end{figure*}

\begin{figure*}
	\centering
	\includegraphics[width=1\textwidth]{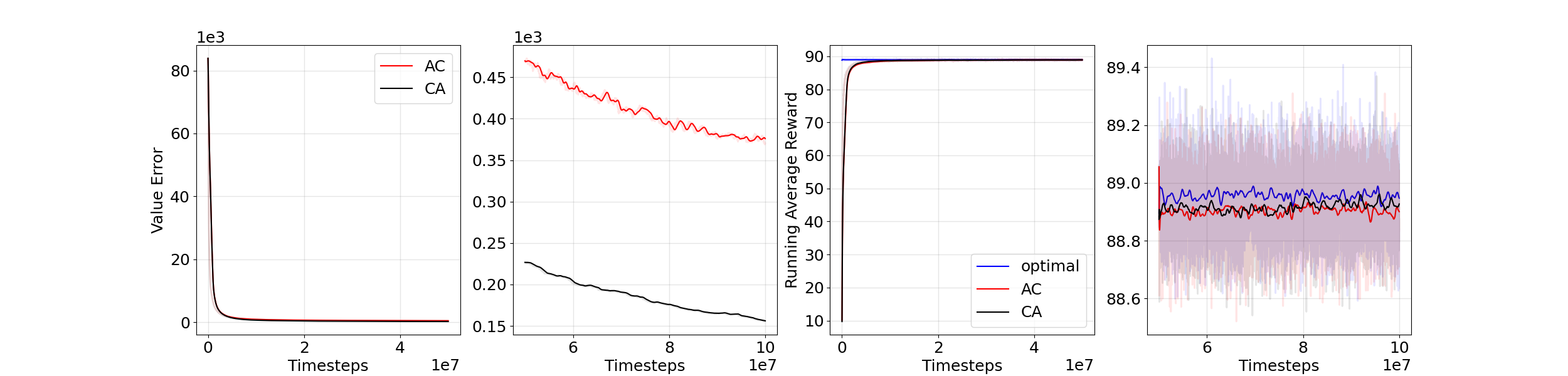}
	\caption{\(|S|=10000,|U|=8,\alpha_1=0.75,\beta_1=0.55,\alpha_2=0.95,\beta_2=0.75\)}
	\label{fig5}
\end{figure*}

\begin{figure*}
	\centering
	\includegraphics[width=1\textwidth]{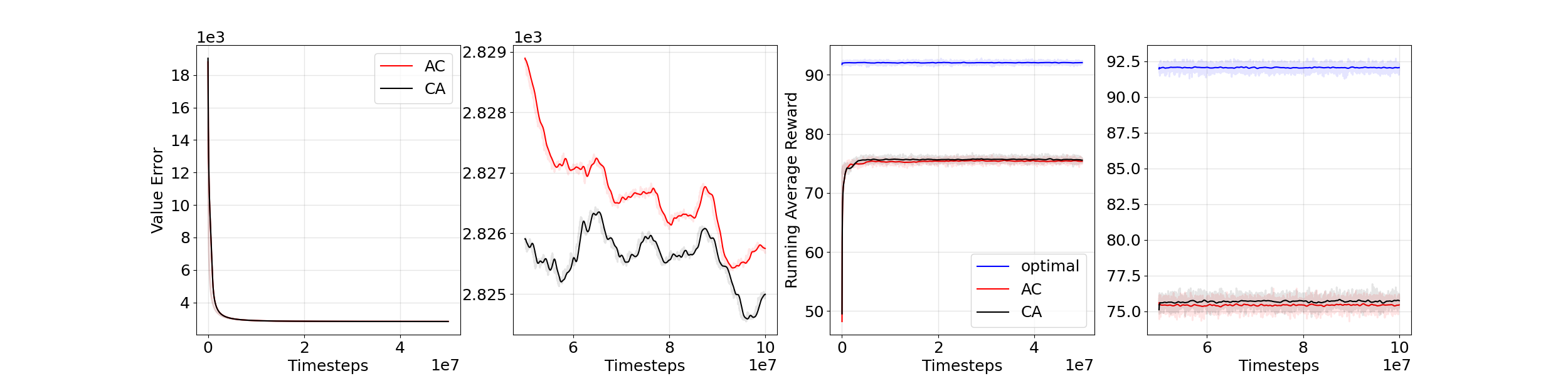}
	\caption{\(|S|=1000,|U|=6,\alpha_1=0.95,\beta_1=0.75,\alpha_2=0.75,\beta_2=0.55\)}
	\label{fig6}
\end{figure*}

\begin{figure*}
	\centering
	\includegraphics[width=1\textwidth]{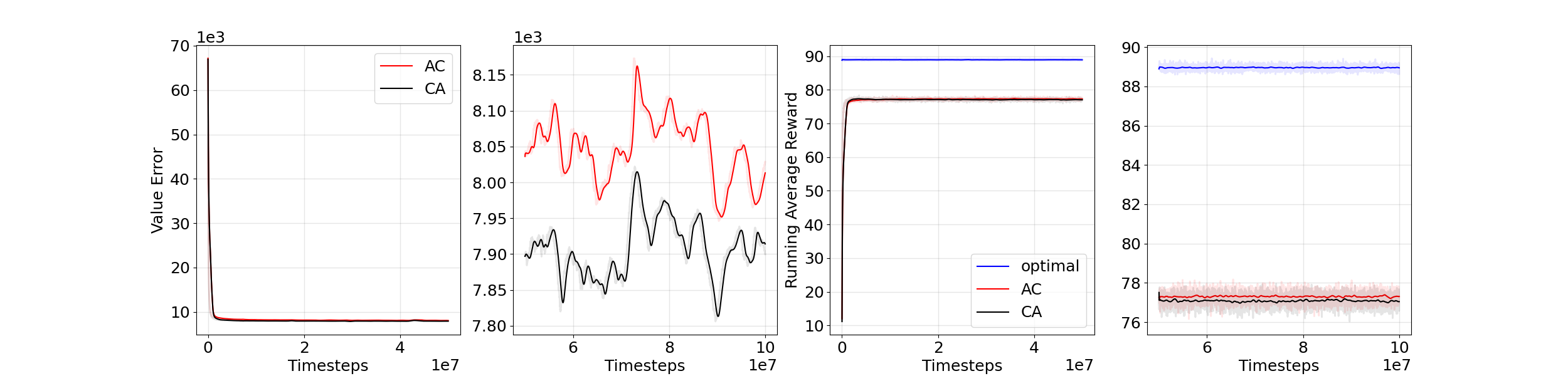}
	\caption{\(|S|=10000,|U|=8,\alpha_1=0.95,\beta_1=0.75,\alpha_2=0.75,\beta_2=0.55\)}
	\label{fig7}
\end{figure*}

{\small
	\begin{table*}
		\centering
		\begin{tabular}{|p{0.8cm}|p{1.5cm}|p{1.5cm}|p{1.5cm}|p{1.5cm}|p{2cm}|p{2cm}|}
			\hline
			\multirow{2}{*}{Exp} & \multicolumn{2}{c|}{Value Error} & %
			\multicolumn{2}{c|}{Average Reward} & \multicolumn{2}{c|}{Computational Time (hours)}\\
			\cline{2-7}
			& AC & CA & AC & CA & AC & CA\\
			\hline
			1 & $79.33$ $\pm4.23$ & $17.57$ $\pm1.59$ &$91.59$ $\pm1.13$ & $92.21$ $\pm0.41$ & $11.18$ $\pm0.36$ & $11.23$ $\pm0.26$\\
			\hline
			2 & $41.27$ $\pm4.63$ & $41.49$ $\pm4.00$ &$91.67$ $\pm0.90$ & $92.18$ $\pm0.49$ & $11.25$ $\pm0.22$ & $11.28$ $\pm0.32$\\
			\hline
			3 & $80.88$ $\pm5.15$ & $21.88$ $\pm0.85$ &$91.73$ $\pm0.70$ & $91.72$ $\pm0.49$ & $10.87$ $\pm0.20$ & $11.28$ $\pm0.30$\\
			\hline
			4 & $24.78$ $\pm1.64$ & $23.38$ $\pm2.22$ &$86.53$ $\pm0.34$ & $86.01$ $\pm1.28$ & $11.05$ $\pm0.20$ & $10.97$ $\pm0.06$\\
			\hline
			5 & $370.12$ $\pm5.08$ & $155.51$ $\pm4.04$ &$88.71$ $\pm0.18$ & $88.84$ $\pm0.25$ & $20.80$ $\pm0.30$ & $20.98$ $\pm0.81$\\
			\hline
			6 & $2825.67$ $\pm1.63$ & $2824.96$ $\pm0.72$ &$75.96$ $\pm0.42$ & $76.11$ $\pm0.71$ & $28.30$ $\pm0.43$ & $28.91$ $\pm0.93$\\
			\hline
			7 & $8028.80$ $\pm41.38$ & $7899.68$ $\pm51.49$ &$77.28$ $\pm0.43$ & $76.94$ $\pm0.46$ & $53.17$ $\pm1.12$ & $52.39$ $\pm0.50$\\
			\hline
			% etc. ...
		\end{tabular}
		\caption{Mean and standard deviation after \(10^8\) time-steps of average reward, value error and computational time}
		\label{table1}
	\end{table*}
}

\begin{figure*}
	\centering
	\includegraphics[width=1\textwidth]{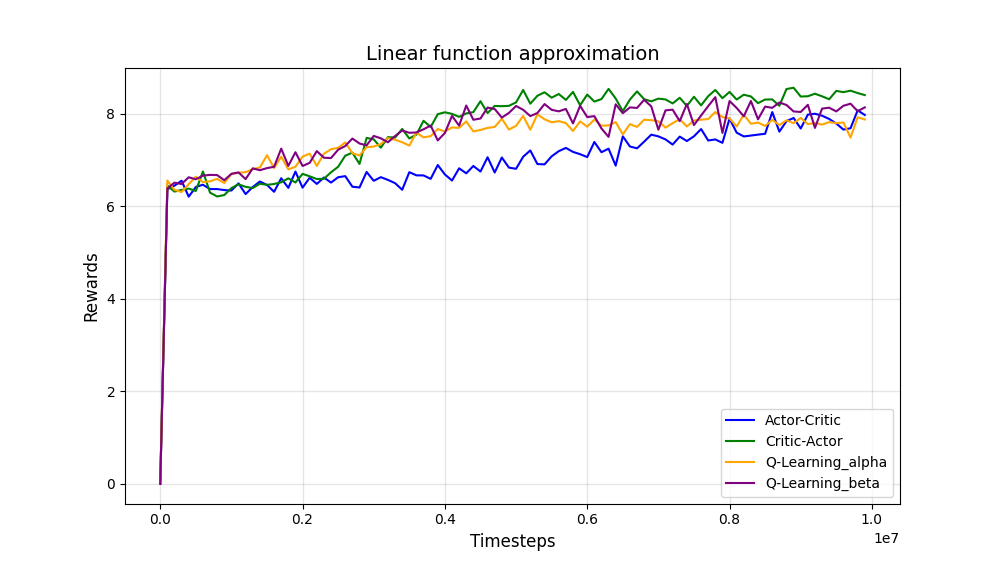}
	\caption{\(|S|=10000,|U|=9,\alpha=0.55,\beta=1\)}
	\label{fig8}
\end{figure*}
\begin{figure*}
	\centering
	\includegraphics[width=1\textwidth]{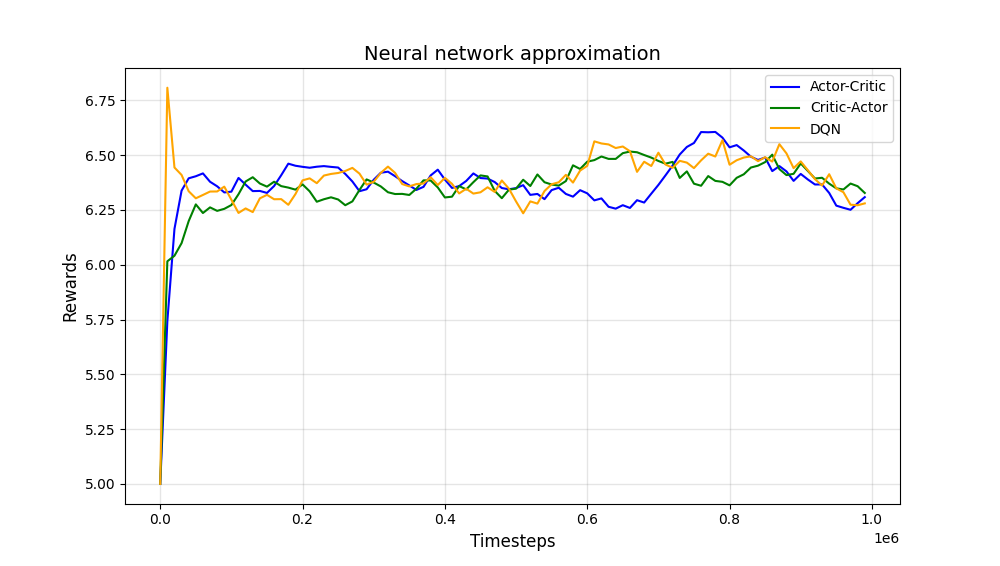}
	\caption{\(|S|=10000,|U|=9,\alpha=0.55,\beta=1\)}
	\label{fig9}
\end{figure*}

\begin{figure*}
	\centering
	\includegraphics[width=1\textwidth]{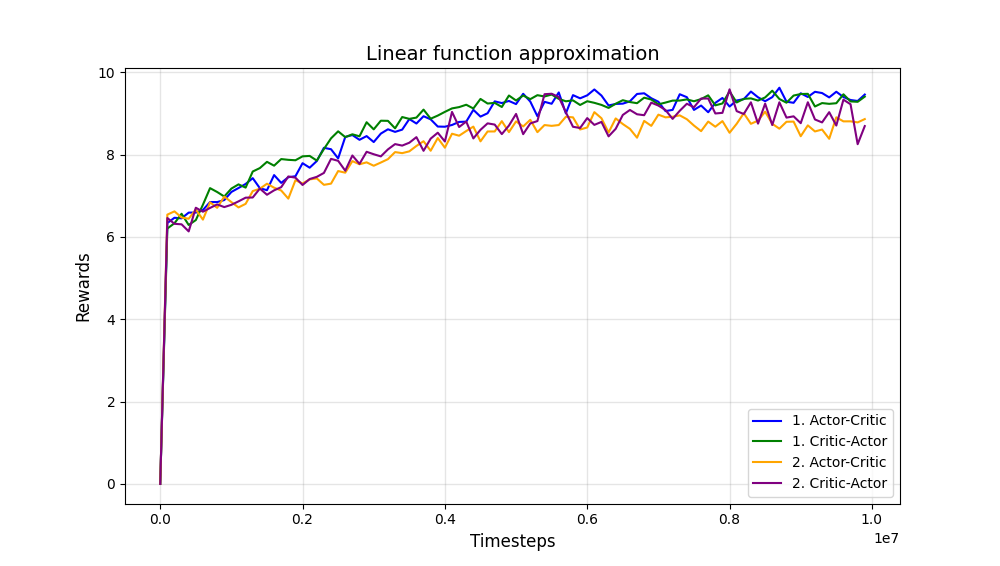}
	\caption{\(|S|=10000,|U|=9,\mbox{ Ideal Step-Sizes with } K_1=1,K_2=100000\)}
	\label{fig10}
\end{figure*}
\begin{figure*}
	\centering
	\includegraphics[width=1\textwidth]{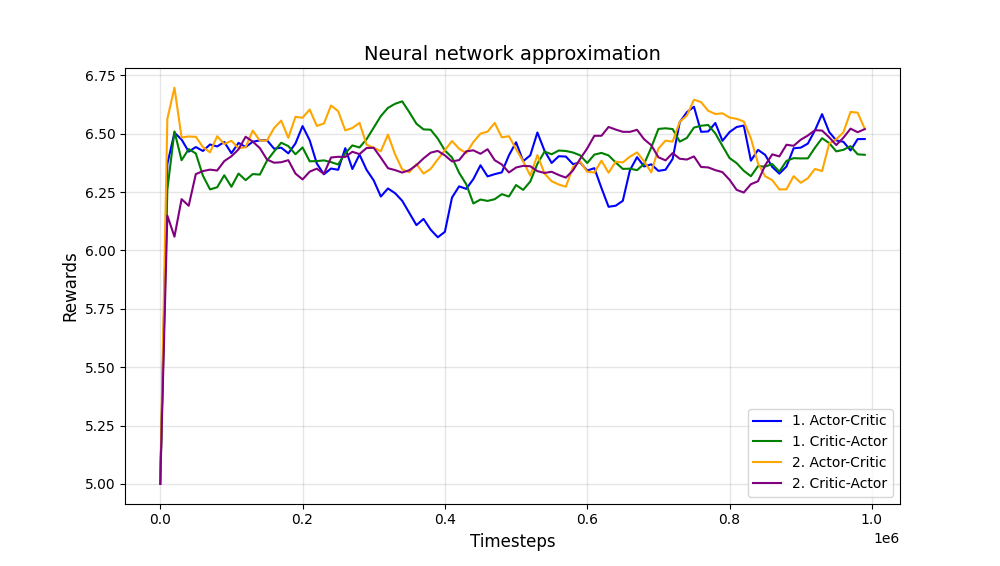}
	\caption{\(|S|=10000,|U|=9, \mbox{ Ideal Step-Sizes with } K_1=0.1,K_2=1000\)}
	\label{fig11}
\end{figure*}

\section{Numerical Results}
\label{numerical}
We show the results of experiments to study the empirical performance of the critic-actor (CA) algorithm \eqref{critic2}-\eqref{actor2} and it's comparison with the actor-critic (AC) algorithm (Algorithm 3 of \cite{KondaB}). Note that we compare the AC and CA algorithms in tabular form. This is both in order to put them on a common footing so that the comparison is legitimate, and also because in tabular form, both have rigorous convergence results. The difficulty in practice with the tabular form is its high dimensionality which can put it at a disadvantage, so the practical usage is invariably with function approximation. We also provide some comparisons with function approximations, but because of the possibility of multiple equilibria etc., there are other factors playing a role, so it is less conclusive evidence than the tabular form.

 We perform our experiments on Grid World settings of various sizes and dimensions. We present here the results of experiments for different step-size schedules. %For each experiment, we compare the performance of our CA algorithm with the AC algorithm.  
Since we use finite state-action MDP, there is always a unique optimal value function that we also compute for our experiments.
Our main performance metric is thus the value function error as obtained by the two algorithms. 
For this, we compute the optimal value function and use the Euclidean distance between the running value function estimates obtained from each algorithm and the optimal value function.

While both CA and AC algorithms aim to optimize the discounted reward objective, we also additionally study performance comparisons using the running average reward criterion obtained as a function of the number of iterates. The figures here show the performance comparisons in terms of (a) the value error estimates  and (b) the running average reward obtained from the two algorithms.
We observe in all cases that both algorithms converge rapidly. We also plot alongside in each figure the tail of the plots separately (after removing the initial transients) for a better depiction of the performance of the two algorithms when closer to convergence.% in order to study more closely the performance difference between the two algorithms. 
Finally, we also compare the amount of computational time required by the two algorithms on each of the settings. All results are shown by averaging over 5 different runs of each algorithm with different initial seeds. The standard error from these runs is also shown. 

In Figures 6 and 7, we study performance comparisons in the case when function approximation is used in the two algorithms. Specifically, the plots of CA and AC in Figure 7 are for the case when a neural network based function approximator is used while the corresponding plots in Figure 6 are for the case when linear function approximation is used in both the algorithms
%Further, we show the results of experiments where the value function is approximated using neural network approximators.
%In all the figures the red curve labeled ``AC" is the actor-critic algorithm (Algorithm 3 of \cite{KondaB}), the black curve labeled ``CA" is the critic-actor algorithm (\ref{critic2})-(\ref{actor2}) and the blue curve labeled ``optimal" is the average running reward obtained by the optimal policy. 
Figures 1-5 show plots for experiments on the tabular setting (as described previously). %Figures 6 and 7 are experiments where we approximate the value function using a neural network and linear function approximation respectively.
Figures 1-3 are for experiments on a 3-dimensional grid world of size \(10 \times 10 \times 10\) (1000 states) and 6 actions (+x,-x,+y,-y,+z,-z). Figure 4 is for the case of a 2-dimensional grid of size \(20 \times 20\) and 4 actions (+x,-x,+y,-y). Figure 5 is for a 4-dimensional grid of size 10 in each dimension (10000 states) and 8 actions.\footnote{Note that both algorithms require a storage of $|S|^2\times |A|$. While actor is the faster recursion in CA, critic is slower here when compared to AC. One cannot infer, in general, computational superiority of one algorithm over another merely  on the basis of the number of states and actions used in the experimental setting.} In the experiments for the tabular setting, we use step-sizes of the form \(a(n)=\frac{1}{(\lfloor \frac{n}{100}\rfloor +1) ^{\alpha_1}},b(n)=\frac{1}{(\lfloor \frac{n}{100}\rfloor +1) ^{\beta_1}}\) for AC and \(a(n)=\frac{1}{(\lfloor \frac{n}{100}\rfloor +1) ^{\alpha_2}},b(n)=\frac{1}{(\lfloor \frac{n}{100}\rfloor +1) ^{\beta_2}}\) for CA. Note that we decrease the step-sizes every 100 time-steps for faster convergence. We mention here that the step-sizes we use for our experiments follow Assumptions 1(i)-(iii) but not necessarily the ideal step-size requirements in Assumptions 1(iv)-(v) and 2(ii) under which convergence of our algorithm was theoretically shown. %Nonetheless we
%observe good performance with these stepsizes. 
However, for consistency with the theoretical development, we also show the results of experiments in Figures 10 and 11 where we use step-sizes that satisfy all the conditions in Assumptions 1 and 2 of the paper. We observe good performance of CA as with AC when these step-sizes are used as well.
In all the Figures 1-5, we can see that the value function error successfully goes to almost zero. %Note that the slight differences between the normal and reversed in the curves are due to differences in step-sizes.

When we approximate the value function using linear function approximation, we decrease the step-sizes every 100 time-steps (same as in the tabular case). We take features for state \(i \in S\) where the value at the \(\lfloor\frac{i}{10}\rfloor\)-th position is 1 and the rest are 0. Note\label{key} that we use features such that the basis functions are linearly independent (see \cite{TsitsiklisV}). When we approximate the value function using a neural network, we decrease the step-sizes every time-step. This is needed for obtaining good performance since otherwise with larger step-sizes that would result in a slow decrease, we observe the algorithm does not show good performance. We use a fully connected feedforward neural network with two hidden layers, 10 neurons per layer, $\tanh$ activation function, and the single number state as input. We run all our experiments for \(10^8\) time-steps. The mean and standard deviation over 5 runs after \(10^8\) time-steps are shown in Table \ref{table1} in the form \(\mu \pm \sigma^2\), where \(\mu\) and \(\sigma^2\) are the mean and standard deviation respectively. 

In Figures 8 and 9, we study performance comparisons when function approximation is used in both actor and critic and we compare these algorithms with Q-learning with function approximation. We use policy gradient for the actor update and TD(0) for the critic update. The experiments are run on a two-dimensional grid of size $100 \times 100$, and 9 actions. Along each dimension, the agent has the option to go forward, backward or stay in the same position. The actions are $((+1,+1),(+1,0),(+1,-1),(0,+1)$, $(0,0),(0,-1),(-1,+1),(-1,0),(-1,-1))$. Figure 8 is for the case where linear function approximation is used in the critic and Boltzman policy (which is a softmax over linear functions) is used for the actor. Note now that the parameter $\theta$ is no longer a function of the tuple $(i,a)$. In the critic we take features for state \(i \in S\) where the value at the \(\lfloor\frac{i}{100}\rfloor\)-th position is 1 and the rest are 0. The features for the Boltzman policy for state-action pair $(i,a)$ is the feature of state $i$ used in the critic at $a$-th position. We use step-sizes of the form \(a(n)=\frac{0.001}{(\lfloor \frac{n}{100000}\rfloor +1) ^{\beta}},b(n)=\frac{0.001}{(\lfloor \frac{n}{100000}\rfloor +1) ^{\alpha}}\) for both AC and CA. %Further, we consider two sets of choices for the step-size for Q-learning, namely $1/(n+1)^\alpha$ and $1/(n+1)^\beta$, respectively. 
We show two plots for Q-learning in Figure 8, namely Q-Learning$\_$alpha and Q-Learning$\_$beta, that use step sizes $a(n)$ and $b(n)$ of AC, respectively, in the two cases. From Figure 8, we see similar performance for CA and Q-Learning with linear function approximation and both algorithms show superior performance than the AC algorithm. 

Figure 9 is for the case when neural network approximators are used for both actor and critic. The neural network architecture for critic is the same as for Experiment 7 (see above), except that 2 numbers (row and column number of the grid) is given as input instead of a single state number. The architecture for the actor is the same as for critic with the last layer as softmax over actions. We use step-sizes of the form \(a(n)=\frac{0.01}{(\lfloor \frac{n}{1000}\rfloor +1) ^{\beta}},b(n)=\frac{0.01}{(\lfloor \frac{n}{1000}\rfloor +1) ^{\alpha}}\) for both AC and CA. We compare here the performance of the algorithms with the Deep Q-Network \cite{Mnih} (that is Q-learning with neural network function approximators). 
From Figure 9, we see similar performance for CA, AC and DQN. However, DQN takes significantly more computational time than CA and AC. We also see greater variance in the  performance of AC. This further indicates that CA is a promising algorithm that can be considered over other algorithms.

Finally, in Figures 10 and 11, we compare the performance of AC and CA with function approximation using the ideal step-sizes, i.e., those that satisfy all of the conditions namely Assumptions 1 (i)-(v) and Assumptions 2 (i)-(ii). Each figure here has two sets of two plots each (one for CA and the other for AC) that are obtained using two different step-size schedules that both satisfy Assumptions 1 and 2.
For simplicity, we refer to these  plots by
1. Actor-Critic, 1. Critic-Actor, 2. Actor-Critic and 2. Critic-Actor, respectively. The step-sizes for 1. Actor-Critic and 1. Critic-Actor have the form \(a(n)=\frac{K_1}{\lfloor \frac{n}{K_2}\rfloor +1},b(n)=\frac{K_1(\log{\lfloor \frac{n}{K_2}\rfloor +2)}}{\lfloor \frac{n}{K_2}\rfloor +2}\) for both AC and CA. 
Further, the step-sizes used for 2. Actor-Critic and 2 .Critic-Actor are of the form \(a(n)=\frac{K_1}{\log{(\lfloor \frac{n}{K_2}\rfloor +2)}(\lfloor \frac{n}{K_2}\rfloor +2)},b(n)=\frac{K_1}{\lfloor \frac{n}{K_2}\rfloor +1}\) for both AC and CA. Figure 10 concerns the case when linear function approximators are used, where all the features are the same as described for Experiment 8 (see above). On the other hand, Figure 11 uses neural network approximators where the neural network architecture is the same as for Experiment 9 (see above).
 
From the figures and the table, it can be seen that the performance of our Critic-Actor algorithm is as good (in fact, slightly better overall) as compared to the well-studied Actor-Critic algorithm. While there is no clear winner between the two schemes (CA vs. AC), our paper presents for the first time an important and previously unstudied class of algorithms -- the critic-actor algorithms that we believe will be studied significantly more in the future.

\section{Conclusions}
\label{conclusion}

{We presented for the first time an important and previously unstudied class of algorithms -- the critic-actor algorithms, that holds much promise.}
Like actor-critic, these are also two-timescale algorithms, however, where the value critic is run on a slower timescale as compared to the policy actor.
Whereas actor-critic algorithms emulate policy iteration, we argue that critic-actor mimics value iteration. We proved the convergence
of the algorithm. Further, we showed the results of several experiments on a range of Grid World settings and observed that our critic-actor algorithm shows 
similar or slightly better performance as compared to the corresponding actor-critic algorithm. 
We hope that our work will result in further research in this hitherto unexplored  direction. For instance, it would be
of interest to study further the critic-natural actor algorithms such as \cite{ShalabhSGL} with time scales reversed. A further possibility would be to explore constrained critic-actor algorithms with the timescales of the actor and the critic reversed from the constrained actor-critic algorithms devised such as in  \cite{Borkar-Constrained, Shalabh, ShalabhL2, Shalabh-Book}.

\end{document}